\theoremstyle{plain}
\newtheorem{theorem}{Theorem}[section]
\theoremstyle{definition}
\theoremstyle{remark}
\newtheorem{remark}{Remark}
\newcolumntype{P}{r@{\,$\pm$\,}l}
\newcommand{\val}[2]{#1 & #2}
\newcommand{\best}[2]{\textbf{#1} & \textbf{#2}}
\begin{document}

% If your paper is accepted and the title of your paper is very long,
% the style will print as headings an error message. Use the following
% command to supply a shorter title of your paper so that it can be
% used as headings.
%
%\runningtitle{I use this title instead because the last one was very long}

% If your paper is accepted and the number of authors is large, the
% style will print as headings an error message. Use the following
% command to supply a shorter version of the author names so that
% they can be used as headings (for example, use only the surnames)
%
%\runningauthor{Surname 1, Surname 2, Surname 3, ...., Surname n}

\twocolumn[

\aistatstitle{Counterfactual Credit Guided Bayesian Optimization}

\aistatsauthor{ Qiyu Wei$^{1}$ \And Haowei Wang$^{2}$ \And  Richard Allmendinger$^{3}$ \And Mauricio A. Álvarez$^{1}$}

\aistatsaddress{ $^{1}$Department of Computer Science, The University of Manchester  \\ $^{2}$Department of ISEM, National University of Singapore  \\
$^{3}$Alliance Manchester Business School, University of Manchester} 
]

\begin{abstract}
Bayesian optimization has emerged as a prominent methodology for optimizing expensive black-box functions by leveraging Gaussian process surrogates, which focus on capturing the global characteristics of the objective function. However, in numerous practical scenarios, the primary objective is not to construct an exhaustive global surrogate, but rather to quickly pinpoint the global optimum. 
Due to the aleatoric nature of the sequential optimization problem and its dependence on the quality of the surrogate model and the initial design, it is restrictive to assume that all observed samples contribute equally to the discovery of the optimum in this context. 
In this paper, we introduce Counterfactual Credit Guided Bayesian Optimization (CCGBO), a novel framework that explicitly quantifies the contribution of individual historical observations through counterfactual credit. By incorporating counterfactual credit into the acquisition function, our approach can selectively allocate resources in areas where optimal solutions are most likely to occur. We prove that CCGBO retains sublinear regret. Empirical evaluations on various synthetic and real-world benchmarks demonstrate that CCGBO consistently reduces simple regret and accelerates convergence to the global optimum.
\end{abstract}

\section{Introduction}

Bayesian Optimization (BO) is a powerful framework for optimizing expensive-to-evaluate black-box functions, demonstrating impressive efficiency across machine learning for hyperparameter tuning and experimental design tasks~\citep{frazier2018tutorial,wang2023recent}. Central to BO is the balance between exploration and exploitation~\citep{garnett2023bayesian}. 
Traditionally, BO algorithms achieve this balance implicitly via acquisition functions~\citep{archetti2019acquisition,wilson2018maximizing,wang2025convergence} that guide sampling decisions based on the posterior uncertainty and predicted outcomes from Gaussian Process (GP) models~\citep{mackay1998introduction}. 

However, in BO the standard exploration–exploitation trade‑off has limitations: it often wastes evaluations in suboptimal regions. BO is fundamentally a resource allocation problem. 
When budgets are extremely tight, finding the promising region early is more valuable than making marginal improvements later, relying solely on exploration and exploitation can be inefficient. 
Meanwhile, a limitation of conventional BO methods is the implicit assumption that all historical observations contribute equally to the optimization progress. 
% GP-based methods implicitly embed each observation into the posterior mean and variance, which only reduces the uncertainty of the observed positions, making it challenging to explicitly evaluate the contribution of each data point to the global optimal solution. 
In real-world landscapes, some specific samples provide more informative evidence of the global optimum than others~\citep{koh2017understanding}, yet this heterogeneity is ignored and we may fail to identify key information. As a result, budget allocation is often inefficiently wasted in suboptimal areas, resulting in the budget not being reasonably allocated to areas that are more likely to have optimal values and slowing down convergence.

\begin{figure*}[!t]
    \centering
    \begin{minipage}[b]{\textwidth}
        \centering
        \includegraphics[width=1\textwidth]{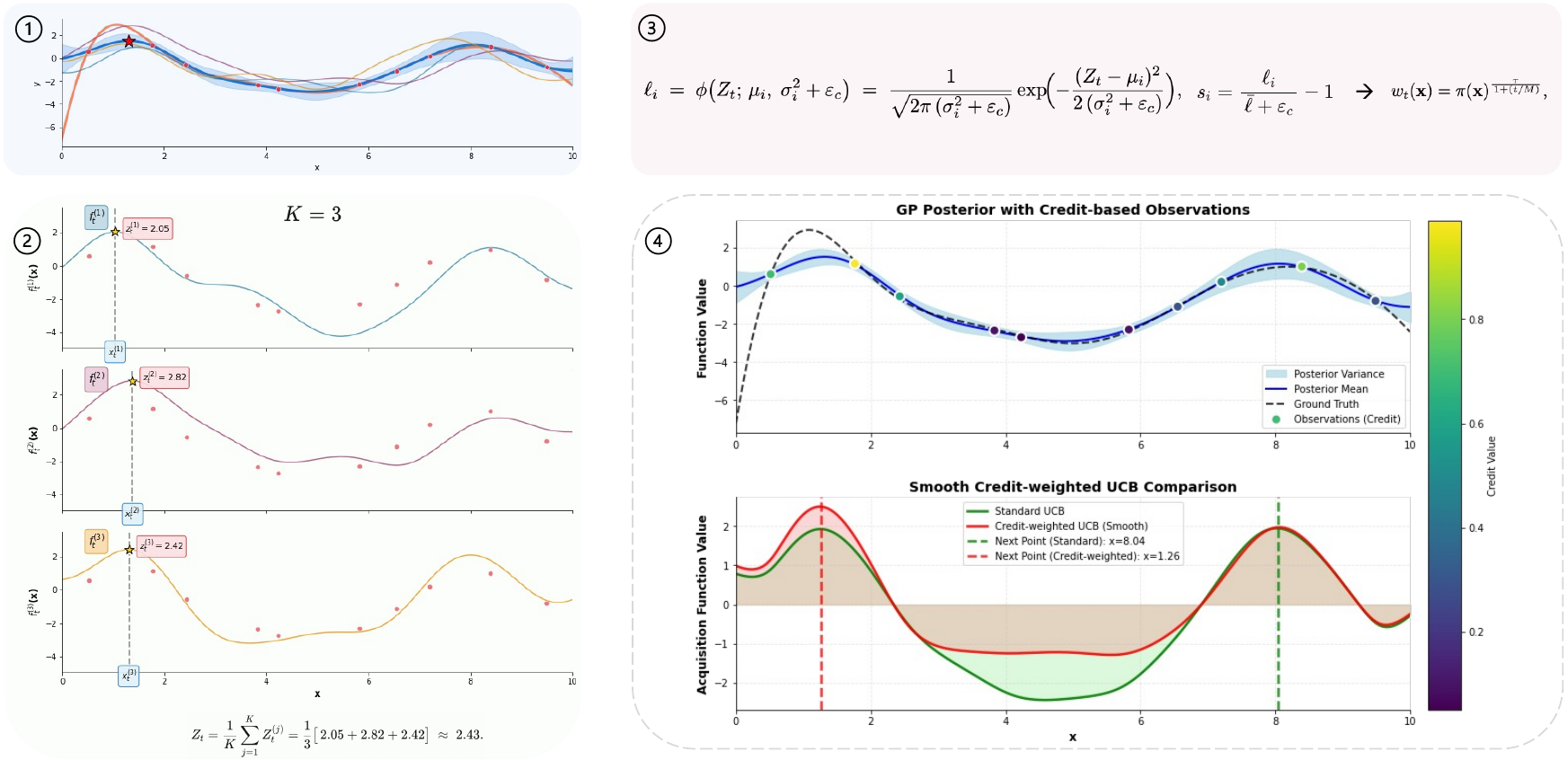}
    \end{minipage}
    \caption{Illustration of Credit-Weighted UCB. Our goal is to maximize the objective function.
(1) From the GP posterior at iteration $t$, we draw several paths; existing observations are shown as dots and the true optimum is marked by $\star$.
(2) In this illustration, for $K=3$ posterior samples, we compute each sample’s maximizer $x_t^{(j)}$ and maximum $Z_t^{(j)}$, and form the Monte Carlo proxy of the global maximum $Z_t$.
(3) For every observed point $x_i$, we assign a counterfactual credit $\ell_i$, normalize and propagate it, then obtain the weight $w_t(x)$.
(4) The right-hand color bar shows the counterfactual credit. By incorporating counterfactual credit into the UCB (red), our method concentrates exploitation on high‑contribution regions, yielding a next query that targets the true optimum compared to the standard UCB (green).
}

    \label{fig:Illustration} 
\end{figure*}

Some existing methods add regional constraints to avoid less informative or unsafe samples and focus on promising areas~\citep{sui2015safe, acerbi2017practical, eriksson2019scalable, kim2020safe}. However, these methods often rely on manual thresholds and do not have adaptive capabilities, and using local optimization instead of global optimization may lead to suboptimal results. 
In addition, some work uses expert knowledge to ``Bring Human Back Into Loop''~\citep{hvarfner2022pi,hvarfner2023general}, but this often requires good priors. Without accurate priors, performance can be worse than standard BO, and in real applications, we cannot have a good prior for all settings. 
% Currently, these existing methods lacks a general mechanism that can explicitly quantify the contribution of each observation point to the global optimum without expert prior assumptions.
However, these methods essentially rely on either external thresholds or expert priors. Therefore, they are unable to explicitly characterize which region is more promising from the observations itself.

To explicitly address these limitations, we propose Counterfactual Credit Guided Bayesian Optimization (CCGBO), which introduces an explicit mechanism to quantify the contribution of each historical observation using counterfactual reasoning~\citep{harutyunyan2019hindsight, mesnard2020counterfactual, meulemans2023would}. 
Because credits are derived directly from the GP posterior, CCGBO does not require external priors.
Specifically, counterfactual credit works as a proxy of contribution, assessing how significantly the prediction of the current optimum would degrade if a particular observation were absent, directly answering the question: ``How significantly would our prediction of the current optimum deteriorate if a particular observation were absent?'' By explicitly quantifying these contributions, 
% {\color{red}how about importance? contribution is a little bit odd, did anyone use this before?but this is just a minor comment}
CCGBO moves beyond traditional exploration-exploitation trade-off to a richer three-dimensional trade-off: exploration-exploitation-importance.
Integrating counterfactual credit into the BO framework empowers BO to allocate sampling resources more efficiently and target areas that truly matter, enabling precise identification and prioritization of regions critically influencing rapid convergence to the optimum. Figure~\ref{fig:Illustration} visually illustrates how Credit-Weighted UCB effectively directs exploration toward areas of high contribution, demonstrating more targeted queries than standard UCB \citep{srinivas2009gaussian}.  
In summary, our contributions are threefold:
\begin{itemize}
\item \textbf{Counterfactual credit.}  We introduce counterfactual credit as an efficient proxy for the per‑sample contribution score of BO samples, which does not require manual specification. Quantifying and exploiting the varying counterfactual contribution credit of historical samples, enabling a trade‑off on exploration-exploitation-importance;
\item \textbf{Theoretical analysis.}  We establish that the optimum proxy tracks the true optimum, which justifies CCGBO's early focusing on high-value regions. We further prove that counterfactual credit‑weighted UCB retains sublinear regret, and show that it inherits the theoretical properties of the GP-UCB acquisition function; 
% {\color{red} as we want to show the new three-dim trade-off is better, we need to have some at least simple comparisons with exploration-exploitation trade-off. may be the constants? or just some discussions as remarks. this is quite important}
\item \textbf{Empirical validations.}  We develop a counterfactual credit‑weighted acquisition function, offering a modular toolkit that retains compatibility with any GP‑BO backbone. We empirically validate the performance improvements provided by our counterfactual credit-guided approach across diverse optimization scenarios.
\end{itemize}

\section{Related Works}
\paragraph{Knowledge from Previous Experiments.}
In recent years, there has been growing interest in leveraging data from previous experiments to improve the efficiency of BO.  
\emph{(1) OutlierBO.}
\cite{martinez2018practical} addresses the issue that spurious observations can corrupt the GP surrogate and mislead acquisition. They propose a two‐stage method using a Student-\emph{t} likelihood in the GP to identify and remove outliers. 
\emph{(2) Continuous optimization in non-stationary environments.}
When the objective drifts over time, stale data may lose relevance. \cite{bogunovic2016time} extend GP-UCB to time-varying settings by periodically restarting the model to discard old data or gradually forgetting past observations via exponentially decaying weights. Building on this,~\cite{deng2022weighted} introduces an explicit weight sequence $\omega_t$ into both the GP inference and acquisition computation, making recent points carry greater influence.  
\emph{(3) Historical data utilization in multi-task and transfer learning.}
In contexts where data from related tasks are available, \cite{swersky2013multi} propose a multi‐task GP that learns a task covariance kernel, thereby implicitly weighting historical data according to inter‐task similarity. \cite{feurer2018scalable} fit a separate GP to each previous optimization run and combine their predictions with weights based on rank similarity between the source and target tasks. \cite{hakhamaneshi2021jumbo} develop a multi-task BO, which trains a warm GP on large offline datasets and a cold GP on online samples, and dynamically trades off their suggestions based on task relevance. More recently, \cite{mahboubi2025point} introduced a point‐by‐point transfer strategy that evaluates the value of each candidate historical sample via a Gaussian mixture model and incorporates only the observations most aligned with the target task.
Overall, these methods rely on pre-specified mechanisms. We instead compute per–sample weights using a counterfactual method.

\paragraph{Embedding Priors over Function Optimum.}
Embedding informative priors over the location of a function's global optimum within BO has attracted growing interest.
\emph{(1) Expert‐Driven Priors.}
A number of works have sought to inject user beliefs directly into the acquisition process. \cite{ramachandran2020incorporating} warp the input space according to a prior distribution over promising regions. \cite{souza2021bayesian} and \cite{hvarfner2022pi} generalize this idea, defining a prior over the optimum that modulates any standard acquisition function. \cite{huang2022bayesian} complement these by actively querying experts for pairwise preferences and encoding the resulting belief model into the GP prior. \cite{wu2017bayesian} exploit the available gradient information as a form of expert knowledge, leading to the derivative‐inclusive knowledge gradient algorithm that can drastically speed up convergence.
\emph{(2) Physics‐Informed and Simulator‐Based Priors.}
In domains where there are partial physical models, integrating these laws into BO can produce substantial gains. \cite{khatamsaz2023physics}  embed known material‐science relationships directly into GP kernels to optimize NiTi alloy processing, while \cite{kobayashi2025physics} infuse epitaxial growth laws into the prior and demonstrate accurate extrapolation for III–V semiconductor deposition. \cite{boltz2025leveraging} further show that using fast accelerator‐simulation models as a GP mean function permits online tuning of particle accelerators with fewer expensive measurements.
\emph{(3) Functional‐Structure Priors.}
Beyond beliefs and data, exploiting the known structural properties of the objective can also guide BO. \cite{li2017bayesian} inject monotonicity constraints via virtual derivative observations to enforce that the surrogate respects known increasing or decreasing trends. \cite{kandasamy2015high} address high‐dimensional spaces by assuming an additive decomposition of the objective. However, these methods all depend heavily on having correctly specified priors, whether from experts, simulators, or structural assumptions, and can perform poorly or lose robustness when such prior information is unavailable, biased, or misspecified.

\section{Preliminaries}

In this section, we summarize BO setup and notation, and introduce an auxiliary prior augmented acquisition.

\subsection{Bayesian Optimization}
BO is a sequential decision-making framework that aims to find the global optimum of an expensive black-box function. BO employs a surrogate model, typically a GP, to approximate the unknown function, and an acquisition function to balance exploration and exploitation when selecting the next evaluation point.

\textbf{Problem Setting.}  
We consider the problem of maximizing an expensive black-box objective function
\(
  f: \mathcal{X}\to\mathbb{R},~
  \mathbf{x}^* = \arg\max_{\mathbf{x}\in\mathcal{X}} f(\mathbf{x}),
\)
where \(\mathcal{X}\subset\mathbb{R}^d\) is a compact domain and each evaluation of \(f(\mathbf{x})\) incurs high cost. At iteration \(t\), we have collected observations
\(
  \mathcal{D}_t = \bigl\{(\mathbf{x}_i,y_i)\bigr\}_{i=1}^t,~
  y_i = f(\mathbf{x}_i) + \varepsilon_i,~
  \varepsilon_i\sim\mathcal{N}(0,\sigma_n^2),
\)
where \(\sigma_n^2\) denotes the variance of observation noise. BO maintains a GP surrogate that, conditioned on \(\mathcal{D}_t\), yields a posterior mean \(\mu_t(\mathbf{x})\) and standard deviation \(\sigma_t(\mathbf{x})\). An acquisition function \(\alpha(\mathbf{x})\), built from \(\mu_t\) and \(\sigma_t\), then prescribes the next query point:
\(
  \mathbf{x}_{t+1} = \arg\max_{\mathbf{x}\in\mathcal{X}}\,\alpha(\mathbf{x}),
\)
after which we observe
\(
  y_{t+1} = f(\mathbf{x}_{t+1}) + \varepsilon_{t+1},
\)
and update
\(
  \mathcal{D}_{t+1} = \mathcal{D}_t \cup \{(\mathbf{x}_{t+1},y_{t+1})\}.
\)
This process repeats until a stopping criterion is met, and we return
\(
  {\mathbf{x}}^* 
\) as our estimated optimizer.

% which is equivalent to \(\arg\max_{1\le i\le T} f(\mathbf{x}_i)\) as our estimated optimizer.
\textbf{Surrogate Model.}  
We place a GP prior on \(f\):
\(
f(\mathbf{x})\sim\mathcal{GP}\bigl(m(\mathbf{x}),\,k(\mathbf{x},\mathbf{x}')\bigr),
\)
where \(m(\mathbf{x})\) is the prior mean and \(k(\mathbf{x},\mathbf{x}')\) is a positive-definite covariance kernel.  Conditioned on \(\mathcal{D}_t\), the posterior at a test point \(\mathbf{x}\) is
\(
  \mu_t(\mathbf{x}) = m(\mathbf{x}) + k\bigl(\mathbf{x},\mathbf{X}_{1:t}\bigr)\bigl[\mathbf{K}_{1:t} + \sigma_n^2\mathbf{I}\bigr]^{-1}\bigl(\mathbf{y}_{1:t}-m(\mathbf{X}_{1:t})\bigr),
\)
\(
  \sigma_t^2(\mathbf{x}) = k(\mathbf{x},\mathbf{x}) - k\bigl(\mathbf{x},\mathbf{X}_{1:t}\bigr)\bigl[\mathbf{K}_{1:t} + \sigma_n^2\mathbf{I}\bigr]^{-1}k\bigl(\mathbf{X}_{1:t},\mathbf{x}\bigr),
\)
where \(\mathbf{X}_{1:t}=[\mathbf{x}_1,\dots,\mathbf{x}_t]\), \(\mathbf{y}_{1:t}=[y_1,\dots,y_t]^\top\), and \(\mathbf{K}_{1:t}\) is the kernel matrix with entries \(K_{ij}=k(\mathbf{x}_i,\mathbf{x}_j)\).  

\textbf{Acquisition Function.}  
Given the posterior \(\mathcal{N}\bigl(\mu_t(\mathbf{x}),\sigma_t^2(\mathbf{x})\bigr)\), BO selects the next point by maximizing an acquisition function.  For the Upper Confidence Bound (UCB), we use
\(
  \mathrm{UCB}_t(\mathbf{x}) = \mu_t(\mathbf{x}) + \beta_t \,\sigma_t(\mathbf{x}),
  ~ \beta_t>0,
\)
and choose
\(
  \mathbf{x}_{t+1} = \arg\max_{\mathbf{x}\in\mathcal{X}} \mathrm{UCB}_t(\mathbf{x}).
\)
The new observation \((\mathbf{x}_{t+1},y_{t+1})\) is added to \(\mathcal{D}_{t+1}\), and the procedure repeats until termination. Upon completion, the best observed point \({\mathbf{x}}^*\) is returned.

\subsection{Augmented Acquisition Functions}
In many real-world tasks, one also has access to auxiliary information \(a\in\mathcal{A}\) (e.g.\ domain clues, contextual features) that correlates with the optimizer's location. We encode this via a prior
\begin{equation}
  \kappa(\mathbf{x}\mid a)
  = \Pr\bigl(\mathbf{x}=\arg\max_{\mathbf{x}'}f(\mathbf{x}') \,\bigm|\, a\bigr),
\end{equation}
which places high mass on \(\mathbf{x}\) likely to be optimal given \(a\).  We then augment any base acquisition \(\alpha(\mathbf{x})\) by
\begin{equation}
  \alpha_a\bigl(\mathbf{x},a\bigr)
  = \alpha \bigl(\mathbf{x}\bigr)\;\kappa(\mathbf{x}\mid a),
  ~
  \mathbf{x}_{t+1}=\arg\max_{\mathbf{x}} \alpha_a(\mathbf{x},a).
\end{equation}
By forming \(\alpha_a(\mathbf{x})\) as the product of the GP-based acquisition \(\alpha(\mathbf{x})\) and the prior \(\kappa( \mathbf{x} \mid a)\), we focus sampling on points where both the surrogate model predicts high utility and the auxiliary information indicates a high likelihood of optimality.

\section{Counterfactual Credit Guided Bayesian Optimization (CCGBO)}

In this section, we introduce CCGBO, we first motivate and define counterfactual credit, then derive a posterior-based estimator and its propagation to candidates, and finally integrate it into a credit-weighted acquisition.

\subsection{Counterfactual Credit for BO}

In BO, each observed sample contributes unequally to the discovery of the global optimum.  At each iteration \(t\), selecting a candidate point \(\mathbf{x}_t\) not only yields an immediate function evaluation \(y_t\), but also influences the future trajectory of the optimization and, consequently, the speed with which the true optimum is located. Our goal is to assign a credit \(c(\mathbf{x}_t)\) to each observed sample that quantifies its contribution to the downstream search for the optimum.
A natural ``Monte Carlo'' style approach is to simulate multiple complete BO trajectories \(\{\tau_i\}\) starting from the current dataset and to estimate
\(
  \bar R(\mathbf{x}_t)
  \;=\;
  \frac{1}{K}
  \sum_{i=1}^K
    R\bigl(\tau_i \mid \mathbf{x}_t \bigr),
\)
where \(K\) is the number of simulated trajectories, and \(R(\tau_i\mid \mathbf{x}_t)\) denotes the total discounted reward (e.g., reduction in best seen value) along trajectory \(\tau_i\) given that sample \(\mathbf{x}_t\) was selected at iteration \(t\). However, because each simulated trajectory is subject to environmental noise and the stochasticity of the acquisition policy, the variance of this estimator is high. 
This forward and rollout-based sequential simulation leads to low sample efficiency in the optimization process, which is not suitable for real-time BO.

Ideally, one would like to perform counterfactual updates for all candidate samples in a single trajectory: Even if a particular point \(\mathbf{x}'\) was not chosen at iteration \(t\), we would still like to estimate ``what would have happened'' had we observed \(f(\mathbf{x}')\). Existing BO methods do not provide this capability, since they only update the posterior at the actually sampled points.
Therefore, we propose a counterfactual credit computation as a proxy of contribution that (i)~leverages the GP posterior to estimate, for each candidate \(\mathbf{x}\), the hypothetical contribution to finding the optimum, and (ii)~can be evaluated in closed form and with low computational overhead at each iteration. Here we remodel the calculation method of contribution from \emph{``how does choosing an observation $\mathbf{x}$ in the past affect the finding of optimal''} to \emph{``given the optimal, how relevant was the choice of observation $\mathbf{x}$ in the past to achieve it?''}. 
If a sample contributes a lot about the optimal location, it usually means that it is located in or close to the area in space where high values are most likely to appear, and it will have a larger credit accordingly. Although observations with poor results will also contribute to finding the optimal value, their contribution is smaller than that of observations near the optimal value, which will be reflected in the credit.
% Incorporating counterfactual credit into BO enhances the traditional optimization strategy by assigning credit to previously evaluated points, effectively guiding the optimization toward regions with higher potential. 
By incorporating counterfactual credit into the acquisition function, we move beyond the exploration-exploitation trade-off to a richer framework that uses the heterogeneous informativeness of samples. This enables BO to allocate sampling resources more efficiently, concentrating effort on regions that are most relevant to locating the global optimum.

\subsection{Computation of Counterfactual Credit}

At iteration \(t\), let the dataset be
\(
\mathcal{D}_t = \{(\mathbf{x}_i, y_i)\}_{i=1}^t,
\)
and let the GP posterior at any \(\mathbf{x}\in\mathcal{X}\) have mean \(\mu_t(\mathbf{x})\) and standard deviation \(\sigma_t(\mathbf{x})\).  Rather than directly using the observed maximum \(\max_i y_i\), we construct a current global optimum proxy \(Z_t\) via a Monte Carlo approximation to the UCB criterion:
Draw \(K\) independent sample paths from the GP posterior
\(
f_t^{(j)}(\cdot)\sim\mathcal{GP}\bigl(\mu_t(\cdot),\,k_t(\cdot,\cdot)\bigr),
~ j=1,\dots,K.
\)
For each sample \(j\), we approximate its maximizer
\(
\mathbf{x}_t^{(j)} 
\;=\; 
\arg\max_{\mathbf{x}\in\mathcal{X}} f_t^{(j)}(\mathbf{x}),
\)
and record the corresponding current global optimal proxy
\(
Z_t^{(j)} 
\;=\; 
f_t^{(j)}\bigl(\mathbf{x}_t^{(j)}\bigr).
\)
We then aggregate the \(K\) sample maxima into a single proxy  
\(
Z_t 
\;=\;
\frac{1}{K}\sum_{j=1}^K Z_t^{(j)}.
\)
This Monte Carlo proxy captures both the posterior mean variance trade-off and uncertainty about the true global maximum.
Denote for brevity at each observed location \(\mathbf{x}_i\):
\(
\mu_i = \mu_t(\mathbf{x}_i),
~
\sigma_i = \sigma_t(\mathbf{x}_i).
\)
We compute a likelihood score quantifying how likely each \(\mathbf{x}_i\) is to have produced \(Z_t\):
\begin{equation}
\begin{split}
\ell_i
&\;=\;
\phi\bigl(Z_t;\,\mu_i,\;\sigma_i^2 + \varepsilon_c\bigr)\\
&\;=\;
\frac{1}{\sqrt{2\pi\,(\sigma_i^2 + \varepsilon_c)}}
\exp\!\Bigl(-\frac{(Z_t - \mu_i)^2}{2\,(\sigma_i^2 + \varepsilon_c)}\Bigr),
\end{split}
\end{equation}
where \(\phi(\cdot;m,v)\) is the Gaussian density with mean \(m\), variance \(v\), and \(\varepsilon_c>0\) is a small constant used to prevent division by zero and stabilize the denominator.
Next, we introduce the unconditional baseline
\(
\bar\ell
\;=\;
\frac{1}{t}\sum_{j=1}^t \ell_j.
\)
And define the raw counterfactual score
\(
s_i
\;=\;
\frac{\ell_i}{\,\bar\ell + \varepsilon_c\,}
\;-\;
1,
\)
so that \(s_i>0\) indicates \(\mathbf{x}_i\) positively contributes to achieving the global proxy \(Z_t\).
Finally, we normalize and bound these scores into credits \(c_i\in[r_{\min},\,r_{\max}]\).  Let \(t\) be the current number of observations.  We first calculate a normalized rank
\(
r_i
\;=\;
\frac{\bigl|\{\,j:1\le j\le t,\;s_j \le s_i\}\bigr|-1}{\,t-1\,}
\;\in\;[0,1].
\)
Then we linearly map this rank into the credit interval:
\(
c_i
\;=\;
r_{\min}
\;+\;
(r_{\max}-r_{\min})\,r_i,
\)
where \(r_{\min}=0.1\) and \(r_{\max}=1\). We set \(r_{\min}=0.1\) to avoid collapsing any weight to zero, which would prematurely exclude regions.
The resulting \(\{c_i\}\) form our counterfactual credits, which can be seamlessly incorporated into augmented acquisition functions, thus realizing a three-way exploration-exploitation-importance strategy in BO. Next, we will explain how to extend the discrete credits \(c_i\) to arbitrary candidate points.

\subsection{Counterfactual Credit-Weighted Acquisition Function}

\begin{algorithm}[t]
\caption{Counterfactual Credit Guided BO}
\label{alg:ccgbo_compact}
\begin{algorithmic}[1]
\State \textbf{Input:} objective \(f\), domain \(\mathcal X\), initial data $\mathcal{D}_t$, total iterations \(N\), credit weight \(\lambda\), UCB factor \(\{\beta_t\}\), decay \(\tau\), proxy samples \(K\).
\State \textbf{Output:} estimated maximizer \(\mathbf{x}^*\).
\For{\(t=1,\dots,N\)}
  \State  Train on \(  \mathcal{D}_t \to \mu_t,\,\sigma_t\).
    \For{\(j=1,\dots,K\)}
      \State Sample \(f_t^{(j)}\sim\mathcal{GP}(\mu_t,k_t)\).
      \State \(\mathbf{x}_t^{(j)}\gets\arg\max_{\mathbf{x}}f_t^{(j)}(\mathbf{x})\);
        \(Z_t^{(j)}\gets f_t^{(j)}(\mathbf{x}_t^{(j)})\).
    \EndFor
  \State Obtain optimistic estimate \(Z_t\gets\frac1K\sum_jZ_t^{(j)}\).
  \State Compute per‐point credits \(\{c_i\}\), credit field \(\pi(\mathbf{x})\) and weighting factor \(w_t(\mathbf{x})\).
  \State Compute UCB \(\alpha(\mathbf{x}) = \mu(\mathbf{x})+\beta_t\,\sigma(\mathbf{x})\); shift: \(\tilde{\alpha}(\mathbf{x}) = \alpha(\mathbf{x}) - \min_{\mathbf{x}'} \alpha(\mathbf{x}')\).
  \State Obtain Credit-Weighted UCB \(\alpha^{\mathrm{ccg}}(\mathbf{x}) = \bigl[(1-\lambda)+\lambda\,w_t(\mathbf{x})\bigr]\,\tilde{\alpha}(\mathbf{x})\).
  \State \(\mathbf{x}_{\rm new}\gets\arg\max_{\mathbf{x}\in X_{\rm cand}}\alpha^{\mathrm{ccg}}(\mathbf{x})\), \(y_{\rm new}\gets f(\mathbf{x}_{\rm new})\).
  \State \(\mathcal{D}_{t+1} \leftarrow \mathcal{D}_t \cup \{(\mathbf{x}_{\mathrm{new}},\,y_{\mathrm{new}})\}\).
\EndFor
\State \Return \(\mathbf{x}^*=\arg\max_{(\mathbf{x},y)\in\mathcal{D}_t}y\).
\end{algorithmic}
\end{algorithm}

In standard BO, acquisition functions such as UCB balance exploration and exploitation based on the posterior predictive mean \(\mu(\mathbf{x})\) and standard deviation \(\sigma(\mathbf{x})\):
\(
  \operatorname{UCB}(\mathbf{x}) = \mu(\mathbf{x}) + \beta_t\, \sigma(\mathbf{x}),
\)
where \(\beta_t > 0\) is a tunable parameter. However, this criterion treats all past observations as equally informative, ignoring their varying contributions.
To address this, we introduce a credit-weighted acquisition function that integrates counterfactual credits into an acquisition function.  Let
\(
  \{(\mathbf{x}_i, y_i)\}_{i=1}^t
  ~\text{and}~
  \{c_i\}_{i=1}^t
\)
denote the evaluated points and their credits. We proceed in two steps:

(1)\emph{ Propagating Credits to Continuous Candidates.}
Given \(X_{\rm cand}\), we estimate a candidate credit \(c(\mathbf{x})\) by averaging the credits of its \(H\) nearest neighbors in the evaluated set:
\(
  c(\mathbf{x})
  = \frac{1}{H}\sum_{j \in \mathcal{N}_H(\mathbf{x})} c_j,
\)
where \(\mathcal{N}_H(\mathbf{x})\) are the indices of the \(H\) nearest \(\mathbf{x}_j\) to \(\mathbf{x}\) under Euclidean distance. This simple KNN‐based propagation yields a smooth credit field
\(\pi(\mathbf{x}) = \tfrac{c(\mathbf{x})}{\max_j c_j}\)
over \(\mathcal{X}\).

(2)\emph{ Credit-Weighted UCB.}
To ensure that the acquisition values remain non-negative before applying credit weights, we first shift the UCB by its minimum over candidates. Let \(\alpha(\mathbf{x}) = \mu(\mathbf{x}) + \beta_t\,\sigma(\mathbf{x})\) denote the standard UCB. We define the shifted acquisition as
\(
  \tilde{\alpha}(\mathbf{x}) = \alpha(\mathbf{x}) - \min_{\mathbf{x}' \in X_{\mathrm{cand}}} \alpha(\mathbf{x}'),
\)
so that \(\tilde{\alpha}(\mathbf{x}) \ge 0\) for all candidates. Since the shift is an \(\mathbf{x}\)-independent constant, it does not change the argmax of the unweighted acquisition. We then define the credit-weighted acquisition function:
\begin{equation}
  \mathrm{UCB}_{\rm credit}(\mathbf{x})
  = \bigl[(1-\lambda) + \lambda\,w_t(\mathbf{x})\bigr]\,\tilde{\alpha}(\mathbf{x}),
\end{equation}
where \(\lambda \in [0,1]\) modulates the strength of the credit influence, and the weighting factor \(w_t(\mathbf{x})\) is computed based on the counterfactual evaluation of each observation's contribution:
\begin{equation}
w_t(\mathbf{x}) = \pi(\mathbf{x})^{\frac{\tau}{\,1 + (t/M)}}
\end{equation}
is a per‐iteration weight.  Here \(\pi(\mathbf x)\) is the normalized counterfactual credit for point \(\mathbf x\), \(\tau>0\) controls the sensitivity to credit differences, \(t\) is the current iteration index, and the constant \(M\) sets the ``half‐life'' of credit influence.  At \(t=0\), \(w_0(\mathbf x)=\pi(\mathbf x)^\tau\) strongly biases early sampling toward high‐credit regions; by \(t=M\), the exponent halves to \(\tau/2\), reducing extra weighting.  As \(t\) grows further, the denominator dominates and \(w_t(\mathbf x)\to1\), smoothly reverting the acquisition back to the standard UCB form \(\mu+\beta_t\,\sigma\).
In practice, $M$ and $\tau$ govern the decay of the credit effect, we therefore set them accordingly to match different evaluation budgets.

\paragraph{On the Role of Acquisition Shift.}
A notable property of credit-weighted acquisition is that, unlike standard GP-UCB which is invariant to additive shifts, our acquisition function is sensitive to the shift value because multiplying by non-uniform weights \(w_t(\mathbf{x})\) breaks this invariance.
Subtracting the minimum UCB ensures that the lowest-UCB candidate has acquisition value zero, while higher-UCB candidates retain positive surplus values. This design amplifies the discriminative effect of credit weights: candidates in high-credit regions with large UCB surplus are strongly favored, whereas low-UCB candidates are suppressed regardless of their credit values. Without this shift, negative UCB values multiplied by small weights could yield higher acquisition values than positive UCB values multiplied by larger weights, leading to counterintuitive sampling decisions.

\section{Theoretical Analysis}

Here, we first establish that our Monte Carlo proxy of the current optimum tracks the true optimum. Let \(Z_t\) denote the average optimum of \(K\) posterior sample paths drawn at iteration \(t\). Theoretical analysis shows that \(Z_t\) concentrates around the true optimal value \(f(\mathbf{x}^\star)\) with high probability, with a decomposition into a model bias term and an MC error term. We now state the assumptions and theorem.

\textbf{Assumptions 1}
There exists $\beta_t>0$ such that the GP-UCB confidence event
\(
\mathcal E_t:=\Big\{\,\lvert f(\mathbf{x})-\mu_t(\mathbf{x})\rvert \le \beta_t\,\sigma_t(\mathbf{x})\ \ \forall \mathbf{x}\in\mathcal X\,\Big\}
\)
holds with probability at least $1-\delta$~\citep{srinivas2009gaussian}. 
And there exist constants $c_0,c_1>0$ such that for the zero-mean posterior process 
$\varepsilon_t:=f_t-\mu_t$ we have
$\mathbb{E}\!\big[\sup_{\mathbf{x}\in\mathcal X}\varepsilon_t(\mathbf{x})\,\bigm|\,\mathcal D_t\big]\le c_0 S_t$~\citep{adler2007random},
and, conditionally on $\mathcal D_t$, the centered variables $Z_t^{(j)}-\mathbb{E}[Z_t^{(j)}\mid\mathcal D_t]$
are $(c_1 S_t)$-sub-Gaussian ($\psi_2$-norms are $\le c_1 S_t$)~\citep{ledoux2013probability}.

\begin{theorem}\label{prop:mc-consistency}
Let $\mathcal X\subset\mathbb{R}^d$ be compact, and at iteration $t$ let the GP posterior
be $f_t\sim\mathcal{GP}(\mu_t,k_t)$ with $\sigma_t^2(\mathbf{x})=k_t(\mathbf{x},\mathbf{x})$ and
$S_t=\sup_{\mathbf{x}\in\mathcal X}\sigma_t(\mathbf{x})<\infty$.
Draw $K$ i.i.d.\ posterior sample paths $\{f_t^{(j)}\}_{j=1}^K$, and define
$Z_t^{(j)}=\max_{\mathbf{x}\in\mathcal X} f_t^{(j)}(\mathbf{x})$ and $Z_t=\tfrac1K\sum_{j=1}^K Z_t^{(j)}$.
Let $\mathbf{x}^\star\in\arg\max_{\mathbf{x}\in\mathcal X} f(\mathbf{x})$.
For any $\delta'\in(0,1)$, there exists a constant $C>0$ (depending only on $c_1$) such that
\begin{equation}\label{eq:mc-consistency-bound}
\Pr\!  \left( 
  \bigl|Z_t - f(\mathbf{x}^\star)\bigr| 
  \le 
  \beta_t S_t
  +
  C\,S_t\,\sqrt{\tfrac{\log(1/\delta')}{K}}
\right) \ge 1-\delta-\delta'.
\end{equation}
\end{theorem}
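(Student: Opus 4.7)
The strategy is to split the error into a model-bias term and a Monte Carlo fluctuation term and control each on a high-probability event obtained by union-bounding the GP-UCB confidence event $\mathcal E_t$ (failure $\delta$) with a sub-Gaussian concentration event for the MC average (failure $\delta'$). Conditioning throughout on $\mathcal{D}_t$, the triangle inequality gives
\begin{equation*}
|Z_t - f(\mathbf{x}^\star)| \le \bigl|\mathbb{E}[Z_t^{(1)}\mid\mathcal{D}_t] - f(\mathbf{x}^\star)\bigr| + \bigl|Z_t - \mathbb{E}[Z_t^{(1)}\mid\mathcal{D}_t]\bigr|.
\end{equation*}

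For the bias (first term), I would first apply the elementary Lipschitz property $|\sup_{\mathbf{x}} g - \sup_{\mathbf{x}} h| \le \sup_{\mathbf{x}}|g-h|$ on the event $\mathcal E_t$ to obtain $|\max_{\mathbf{x}}\mu_t(\mathbf{x}) - f(\mathbf{x}^\star)| \le \beta_t S_t$. Writing $f_t^{(1)} = \mu_t + \varepsilon_t^{(1)}$ with $\varepsilon_t^{(1)}$ a zero-mean posterior process, two-sided sandwich bounds for the supremum yield $\mu_t(\mathbf{x}^\star) \le \mathbb{E}[Z_t^{(1)}\mid\mathcal{D}_t] \le \max_{\mathbf{x}}\mu_t(\mathbf{x}) + c_0 S_t$, the upper bound invoking the assumed estimate on the expected supremum. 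Chaining these with the confidence event bounds the bias by $(\beta_t + c_0)S_t$. For the MC term, conditionally on $\mathcal{D}_t$ the centered variables $\{Z_t^{(j)} - \mathbb{E}[Z_t^{(1)}\mid\mathcal{D}_t]\}_{j=1}^K$ are i.i.d.\ and $(c_1 S_t)$-sub-Gaussian, so a standard Hoeffding-type bound for sub-Gaussian averages yields $|Z_t - \mathbb{E}[Z_t^{(1)}\mid\mathcal{D}_t]| \le C_0\, S_t\sqrt{\log(1/\delta')/K}$ with conditional probability at least $1-\delta'$, for a constant $C_0$ depending only on $c_1$. A union bound over $\mathcal E_t$ and the concentration event then delivers the advertised inequality.

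The main obstacle is the additive $c_0 S_t$ term arising from $\mathbb{E}[\sup_{\mathbf{x}}\varepsilon_t^{(1)}\mid\mathcal{D}_t]$, which does not appear explicitly in the stated bound. I would absorb it either into $\beta_t S_t$, since standard GP-UCB schedules pick $\beta_t$ growing with $t$ so that $\beta_t + c_0$ can be written as a slightly inflated $\beta_t$, or into the constant $C$ by letting it depend on $c_0$ as well as $c_1$. A secondary subtlety is that all concentration statements must be understood conditionally on $\mathcal{D}_t$, since $\mu_t$, $\sigma_t$, $S_t$, and the sub-Gaussian norm all depend on the data; integrating out the conditioning before the final union bound is what keeps the two failure probabilities simply additive.
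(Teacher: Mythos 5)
Your proposal is correct and follows essentially the same route as the paper's proof: the identical bias-plus-MC-fluctuation decomposition, sub-Gaussian concentration of the sample mean conditionally on $\mathcal{D}_t$, the expected-supremum bound $c_0 S_t$ for the centered posterior process, the confidence event $\mathcal{E}_t$ for the mean, and a final union bound. You even handle the residual $c_0 S_t$ term the same way the paper does (absorbing it into $\beta_t$), so there is nothing substantive to add.
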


\begin{remark}
By Theorem~\ref{prop:mc-consistency}, with high probability we have
\(
|Z_t - f(\mathbf{x}^\star)| \le (\beta_t + c_0) S_t + C S_t \sqrt{\log(1/\delta')/K},
\)
so the MC proxy \(Z_t\) is close to the true optimum \(f(\mathbf{x}^\star)\).
Consequently, for observed points \(\mathbf{x}_i\) that lie in or near the optimal region, the quantity
\(
(Z_t - \mu_t(\mathbf{x}_i))^2
\)
tends to be smaller, which makes the likelihood score
\(
\ell_i \;=\; \phi\!\big(Z_t;\, \mu_t(\mathbf{x}_i),\, \sigma_t^2(\mathbf{x}_i)+\varepsilon_c\big)
\)
larger. Then resulting credit field \(\pi(\mathbf{x})\) attains higher values around the optimum, making
\(
w_t(\mathbf{x})=\pi(\mathbf{x})^{\tau/(1+t/M)}
\)
also larger. Hence the credit-weighted acquisition
\(
\alpha_t(\mathbf{x})=\big[(1-\lambda)+\lambda\,w_t(\mathbf{x})\big]\big[\mu_t(\mathbf{x})+\beta_t\,\sigma_t(\mathbf{x})\big]
\)
is amplified near high-credit areas.
On the high-probability event, this mechanism makes the sampling tendency toward high-value regions, which promotes earlier hits near the optimum and thus faster early simple-regret decay.
\end{remark}

% Gaussian Process Upper Confidence Bound (GP-UCB) is a popular algorithm for sequential decision-making problems. We propose an extension to GP-UCB by incorporating counterfactual credit weight, showing it matches the classical GP-UCB rate up to a constant factor.
We then analyze the cumulative regret of the Credit-Weighted UCB variant of CCGBO.  Our strategy closely follows the classic GP‐UCB analysis of Srinivas \emph{et al.}~\citep{srinivas2009gaussian} and~\citep{hvarfner2022pi}, with the key modification that at each iteration \(t\) the standard UCB acquisition is rescaled by a factor
\(
w_t(\mathbf{x}) = \bigl[c(\mathbf{x})\bigr]^{\frac{\tau}{\,1 + (t/M)}},
\)
where \(c(\mathbf{x})\) is the normalized counterfactual credit and \(\tau>0\) is the decay parameter. We show that this rescaling only incurs a constant multiplicative penalty in the cumulative regret bound.

\textbf{Assumption 2} (RKHS regularity)\label{assump:rkhs}
The objective function \(f\) lies in the reproducing kernel Hilbert space (RKHS) \(\mathcal{H}_k\) associated with kernel \(k\), with \(\|f\|_{\mathcal{H}_k} \le B\) and \(k(\mathbf{x},\mathbf{x}) \le 1\) for all \(\mathbf{x} \in \mathcal{X}\). By the reproducing property, this implies the deterministic bound \(|f(\mathbf{x})| \le B\) for all \(\mathbf{x} \in \mathcal{X}\).

Under this assumption, following Theorem~3 of~\citet{srinivas2009gaussian}, we choose \(\beta_t\) such that
\(
|f(\mathbf{x})-\mu_{t-1}(\mathbf{x})| \;\le\; \beta_t\,\sigma_{t-1}(\mathbf{x})
\)
holds with probability at least \(1-\delta\) over the observation noise.

We define the cumulative regret of CCGBO as
\(
  R_N^{\mathrm{ccg}} = \sum_{t=1}^N r_t^{\mathrm{ccg}}, \quad \text{where } r_t^{\mathrm{ccg}} = f(\mathbf{x}^*) - f(\mathbf{x}_t)
\)
is the instantaneous regret at iteration \(t\).

\begin{theorem}\label{thm:ccgbo}
Define for each \(t\),
\(
A_t = 1-\lambda + \lambda\,c_{\min}^{\,\tau/(1 + t/M)},
~
B_t = 1-\lambda + \lambda\,c_{\max}^{\,\tau/(1 + t/M)}.
\)
Observe that \(0<A_t\le B_t<\infty\) and \(B_t/A_t\to1\) as \(t\to\infty\).
Under Assumption 2 and the above high‐probability event, the cumulative regret of CCGBO satisfies
\begin{equation}
R_N^{\rm ccg}
\;\le\;
\max_{1\le t\le N}\!\Bigl(\tfrac{B_t}{A_t}\Bigr)\;R_N,
\end{equation}
where \(R_N=O\bigl(\sqrt{N\,\gamma_N\,\beta_N}\bigr)\) is the standard GP-UCB bound from~\citep{srinivas2009gaussian} and \(\gamma_N\) is the maximum information gain.  In particular, since \(\max_{t}B_t/A_t=B_1/A_1<\infty\), CCGBO matches the vanilla GP-UCB rate up to a constant factor that tends to 1 when \(\lambda\to0\) (so \(A_t=B_t=1\)) or \(c_{\max}/c_{\min}\to1\).
\end{theorem}

\begin{remark}[Practical Interpretation of the Constant Factor]
The constant factor \(\max_{1\le t\le N}(B_t/A_t)\) in Theorem~\ref{thm:ccgbo} arises from a conservative worst-case analysis. In practice, this factor is benign for two reasons:
(1)~The ratio \(B_t/A_t\) is monotonically decreasing in \(t\) and converges to 1 as \(t\to\infty\), since the exponent \(\tau/(1+t/M)\to 0\) causes both \(c_{\min}^{\tau/(1+t/M)}\) and \(c_{\max}^{\tau/(1+t/M)}\) to approach 1.
(2)~The worst-case factor \(B_1/A_1\) occurs only at \(t=1\) which is a modest overhead.
Thus, while the theoretical bound includes this constant, the credit weighting effectively degenerates in later iterations, allowing CCGBO to inherit the standard GP-UCB convergence behavior asymptotically.
\end{remark}

The cumulative regret of the standard GP–UCB algorithm is known to satisfy
\(
  R_N = O\bigl(\sqrt{N\,\gamma_N\,\beta_N}\bigr)\,.
\)
In contrast, for CCGBO we obtain the bound
\(
R_N^{\rm ccg}
\;\le\;
\max_{1\le t\le N}\!\Bigl(\tfrac{B_t}{A_t}\Bigr)\;R_N.
\)
Thus, the regret of CCGBO exceeds that of GP–UCB by at most the constant factor \(\max_t(B_t/A_t)\), which is controlled by the extremal credit weights \(c_{\min},c_{\max}\) and the decay parameters \(\tau\) and \(M\).
The proof proceeds by comparing the two acquisition values under the high-probability confidence event, introducing \(A_t\) and \(B_t\) to extract the influence of the credit weights from the regret analysis. 

The theoretical analysis shows that credits computed from $Z_t$ are near the optimal region with high possibility, and also the counterfactual credits does not destroy the sublinear convergence rate of GP-UCB. CCGBO preserves the sublinear convergence rate of GP–UCB up to a controllable constant, demonstrating that adding counterfactual credit only incurs a slight overhead at the worst-case constant level.

% {\color{red}still need to highlight the fundamental question of why and how (how much)this three-dim trade-off improves over traditional? write something is a must}

\section{Experiments}
We validated CCGBO's performance against baseline methods on both synthetic functions and real-world tasks. The following are the experimental setup and results on the benchmarks.

\begin{figure*}[!t]
    \centering
    \begin{minipage}[b]{1\textwidth}
        \centering
        \includegraphics[width=1\textwidth]{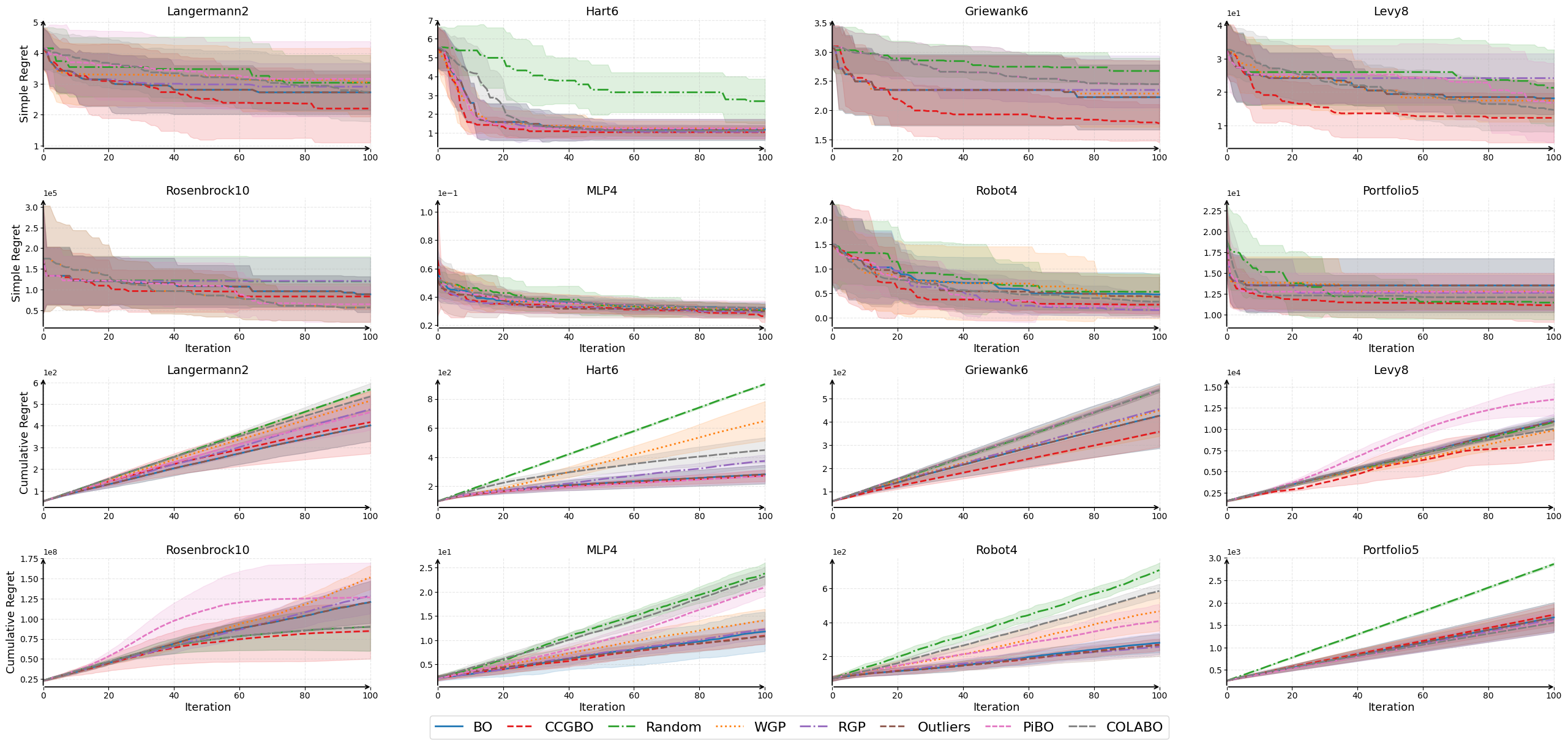}
    \end{minipage}
    % \vspace{-0.25in}
    \caption{Cumulative regret and Simple regret versus iteration for eight benchmark functions.}
    \label{fig:Result_UCB}
    % \vspace{-0.07in}
\end{figure*}

\begin{table*}[h]
\centering
\caption{Area under simple regret metric across benchmarks (lower is better).}
% \vspace{-0.125in}
\label{tab:ausr}
\scriptsize
\setlength{\tabcolsep}{3pt}
\renewcommand{\arraystretch}{1.05}
\resizebox{\linewidth}{!}{%
\begin{tabular}{l*{8}{P}}
\toprule
\textsc{Task} & \multicolumn{2}{c}{\textsc{BO}} & \multicolumn{2}{c}{\textsc{CCGBO}} &
\multicolumn{2}{c}{\textsc{Random}} & \multicolumn{2}{c}{\textsc{WGP}} &
\multicolumn{2}{c}{\textsc{RGP}} & \multicolumn{2}{c}{\textsc{Outlier}} &
\multicolumn{2}{c}{\textsc{PiBO}} & \multicolumn{2}{c}{\textsc{COLABO}} \\
\midrule
Langermann2  & \val{292.3}{69.0}  & \best{268.4}{103.0} & \val{338.3}{84.5}  & \val{324.0}{94.7}  & \val{304.7}{73.0}  & \val{292.3}{69.0}  & \val{339.6}{103.1} & \val{329.9}{71.2} \\
Hart6        & \val{160.3}{58.5}  & \best{134.9}{27.7}  & \val{376.5}{80.2}  & \val{160.7}{42.2}  & \val{152.5}{47.5}  & \val{160.3}{58.5}  & \val{157.3}{44.1}  & \val{181.1}{37.3} \\
Griewank6    & \val{235.0}{52.0}  & \best{203.3}{35.5}  & \val{280.6}{23.4}  & \val{236.5}{54.1}  & \val{238.3}{55.8}  & \val{235.0}{52.0}  & \val{266.0}{31.1}  & \val{266.0}{31.1} \\
Levy8        & \val{2128.3}{510.4} & \best{1514.8}{620.3} & \val{2530.9}{963.2} & \val{2116.1}{563.2} & \val{2446.2}{735.0} & \val{2128.3}{510.4} & \val{2368.8}{930.9} & \val{2132.4}{536.3} \\
Rosenbrock10 ($1e4$)
             & \val{1085.7}{436.6} & \val{966.1}{394.6}  & \val{1221.9}{574.5} & \val{948.0}{406.6}  & \val{1218.1}{559.9} & \val{1085.7}{436.6} & \val{969.7}{428.3} & \best{947.9}{406.6} \\
MLP4         & \val{3.6}{0.3}     & \best{3.3}{0.4}     & \val{3.7}{0.6}     & \val{3.6}{0.5}     & \best{3.3}{0.3}     & \val{3.4}{0.6}     & \val{3.6}{0.5}     & \val{3.7}{0.3} \\
Robot4       & \val{71.1}{39.2}    & \best{48.3}{25.9}    & \val{80.6}{42.8}    & \val{71.1}{59.8}    & \val{50.6}{23.9}    & \val{66.0}{36.4}    & \val{52.2}{24.1}    & \val{60.7}{26.4} \\
Portfolio5   & \val{1360.3}{304.7} & \best{1170.3}{171.3} & \val{1277.0}{225.4} & \val{1320.2}{218.5} & \val{1290.1}{214.6} & \val{1360.3}{304.7} & \val{1287.6}{223.3} & \val{1244.4}{154.9} \\
\bottomrule
\end{tabular}}
\end{table*}

\subsection{Experimental Setup}

We include the following baselines:
First, Standard GP‑UCB~\citep{srinivas2009gaussian} and Random Search are the most natural baselines for BO.
Then, we include Weighted Gaussian Process UCB (WGP)~\citep{deng2022weighted} and Time-Varying Gaussian Process UCB (RGP)~\citep{bogunovic2016time} to cover the non‐stationary scenario. By comparing with WGP and RGP, we show that CCGBO's credit mechanism can also eliminate low-value observations during the optimization process.
Next, OutlierBO~\citep{martinez2018practical} represents robust outlier handling. In contrast, CCGBO's credit assignment naturally downweights low‐contribution points (including noise and outliers).
Finally, PiBO~\citep{hvarfner2022pi} and ColaBO~\citep{hvarfner2023general} embody user‐prior, requiring expert‐supplied beliefs about the optimum. By comparing with PiBO and COLABO, we highlight that our approach can achieve competitive results without any external prior.
WGP, RGP, and OutlierBO are grouped as knowledge from previous experiments methods, whereas PiBO and ColaBO belong to Embedding Priors over the Function Optimum methods.
We demonstrate the superiority of our algorithm by comparing it with standard, non-stationary, robust, and prior guidance methods.
Note on PiBO and ColaBO, for a fair comparison, we do not elicit any external priors from users. Instead, we construct the user‐provided prior by taking the location of the best point in the initial set as the prior mean for both PiBO and ColaBO.

In all experiments, each function evaluation is corrupted by independent Gaussian noise $\varepsilon\sim\mathcal N(0,\sigma_n^2)$ with $\sigma_n^2=0.01$, to simulate realistic measurements. The size of the initial design is set adaptively to \(n_{\mathrm{init}}= \max\bigl(2\,d,\;10\bigr)\,,\) where $d$ denotes the dimensionality of the problem. 
We employ a Mat\'ern $5/2$ kernel for the GP, with hyperparameters learned via maximum likelihood estimation. The iteration budget is 100. The regret is averaged on 50 independent experiments. For the acquisition function, we set the UCB exploration parameter $\beta_t = 2.576$, corresponding to the $99\%$ quantile of a standard normal distribution, and the credit influence parameter $\lambda = 0.5$. We set the ``half‐life'' parameter $M=20$, the Monte Carlo UCB proxy sample number $K=25$, and the number of nearest neighbors \(H=5\).
We employed eight benchmarks, five synthetic and three real-world problems. For clarity, all the tasks aim for function maximization. All experiments were conducted on a MacBook Pro with Apple M2 Pro (10-core CPU, 16 GB RAM). Our code is publicly available in the repository: \url{https://github.com/Qiyu-Wei/CCGBO}.

\textbf{Synthetic Test Problems.}
We evaluate five synthetic test functions:  
\emph{Langermann2} is a 2-dimensional multimodal function on \([0,10]^2\);  
\emph{Hartmann6} is a six-dimensional function on \([0,1]^6\);  
\emph{Griewank6} is a nonconvex six-dimensional landscape defined on \([-600,600]^6\);  
\emph{Levy8} is an 8-dimensional valley-shaped function on \([-10,10]^8\);
and \emph{Rosenbrock10} is a 10-dimensional test function on \([-5,10]^{10}\). We put high-dim synthetic test problems in Supplementary Materials. 

\textbf{Real-World Test Problems.}
We further evaluated CCGBO on three real-world problems.
(1) The NAS experiment models a breast cancer prediction problem from the UCI repository~\citep{Dua:2019} of searching for optimal hyperparameters as a BO problem. Specifically, each query $(x_t,y_t)$ corresponds to a choice of ($[32, 128], [1e-6, 1.0], [1e-6, 1.0],[1, 8]$ for batch size, Learning Rate, Learning Rate decay, hidden dim, respectively), where $y_t$ is the test classification error. (2) The Robot4~\citep{kaelbling2017learning} problem simulates a 4-dimensional robot pushing task using the Box2D physics engine, where each query corresponds to robot initial position $ \in [-5,5]^2$ , initial robot angle control $\in [0, 2\pi]$ and simulation steps $\in [1,30]$, with $y_t$ being the Euclidean distance between the pushed object's final position and target. (3) The Portfolio problem~\citep{bruni2016real} represents a 5-dimensional financial portfolio optimization task, where each query corresponds to asset normalized weight allocations $\in [0,1]^5$.

\subsection{Comparison of CCGBO Against Baselines}

Figures~\ref{fig:Result_UCB} summarize the results on eight benchmarks, and show the mean and standard error of the simregret and cumregret on 8 benchmark problems. We also report the area under simple regret in Table~\ref{tab:ausr} as a summary metric of the entire optimization trajectory, defined as
\(
\mathrm{AUSR}
= \frac{1}{T-1}\sum_{t=2}^{T} \frac{r_{t-1}+r_{t}}{2},
\)
where \(r_t\) denotes the simple regret at iteration \(t\). It measures the remaining gap between the best value found so far and the true optimum.
We report the observations below:

\textbf{(1) Faster regret convergence:}  Across all benchmarks, CCGBO attains the fastest drop of simple regret especially in the early stage and settling at the low regret plateau, outperforming Standard GP‐UCB. This is also confirmed in the results reported in the Table~\ref{tab:ausr}.
This shows that compared with the baseline method, counterfactual credit weighting can more effectively prioritize high-value areas and avoid redundant exploration in ``less promising'' areas. Especially when there are large flat areas or multiple local extremes in the objective function, counterfactual credit can quickly focus on the most informative sampling points, thereby significantly accelerating the convergence speed. Meanwhile, because CCGBO established advantages in the early stage, it has a better understanding of the global optimum in the early stage, which is helpful for the subsequent optimization process, and simple regret maintained an advantage at most tasks.

\textbf{(2) Lower cumulative regret:}  CCGBO maintains an advantage over most baselines, while random search quickly accumulates regret and specialized schemes like WGP and RGP only partially close the gap. This confirms that CCGBO preserves the sublinear regret rate while delivering practical gains.

\textbf{(3) Robustness without artificial priors:}  Unlike PiBO and ColaBO, which rely on user‐supplied beliefs, CCGBO achieves equal or better performance without external priors. The optimization effects of PiBO and ColaBO vary on different benchmarks due to the fact that the prior settings are sometimes good and sometimes bad. Moreover, our method outperforms Outlier‐Robust BO even in the noisy setting, since low‐credit observations are naturally downweighted by the counterfactual credit mechanism.

% We also include some SOTA acquisition functions like LogEI and JES, note that different acquisition function can be used as a basis for comparison of different methods rather than a baseline for direct comparison.

% Since our algorithm is not designed for high-dimensional tasks, the experiments and discussions on high-dimensional tasks are placed in the appendix.
Our algorithm is plug-and-play, it also achieves good results on other different acquisition functions and high-dimensional problems. We put the experiment on other acquisition functions (TS, EI, JES etc.), high-dim problems (25 to 1000 dimensions) and ablation study on $M$, $K$, $H$ in Supplementary Materials. Due to space limitations, details are provided in Supplementary Materials.

Together, these results confirm that CCGBO delivers a general improvement over existing BO techniques, accelerating convergence, reducing cumulative loss, and obviating the need for user-specified priors. It should be noted that the time spent on calculating credit is very small compared to the optimization process and can be ignored.

\section{Conclusion}

In this paper, we introduced CCGBO, an efficient framework that integrates counterfactual credit assignments into the acquisition process to explicitly quantify and exploit the varying contribution of past observations. By augmenting the traditional exploration–exploitation trade-off with a third importance dimension, our method adaptively focuses sampling effort on regions most likely to yield rapid improvements toward the global optimum. We derived an estimator for per-sample credit based on the GP posterior and a Monte Carlo proxy, incorporated this into a Credit-Weighted UCB acquisition function, and showed that the resulting algorithm retains faster simple regret convergence and sublinear cumulative regret. Empirical results on synthetic test function benchmarks and real-world hyperparameter‐tuning task demonstrate that CCGBO outperforms standard BO and recent baseline alternatives, especially in faster convergence rate.

\bibliography{main}

% \bibliographystyle{ieee}

%%%%%%%%%%%%%%%%%%%%%%%%%%%%%%%%%%%%%%%%%%%%%%%%%%%%%%%%%%%%%%%%%%%%%%%%%%%%%%%
%%%%%%%%%%%%%%%%%%%%%%%%%%%%%%%%%%%%%%%%%%%%%%%%%%%%%%%%%%%%%%%%%%%%%%%%%%%%%%%
% APPENDIX
%%%%%%%%%%%%%%%%%%%%%%%%%%%%%%%%%%%%%%%%%%%%%%%%%%%%%%%%%%%%%%%%%%%%%%%%%%%%%%%
%%%%%%%%%%%%%%%%%%%%%%%%%%%%%%%%%%%%%%%%%%%%%%%%%%%%%%%%%%%%%%%%%%%%%%%%%%%%%%%
\newpage
\appendix
\onecolumn

\clearpage

\clearpage
\appendix
\thispagestyle{empty}

% Supplementary material: To improve readability, you must use a single-column format for the supplementary material.
\onecolumn
\aistatstitle{Supplementary Materials}

\section{Iteration Illustration of CCGBO}

\begin{figure}[!ht]
    \centering
    \begin{minipage}[b]{\textwidth}
        \centering
        \includegraphics[width=0.6\textwidth]{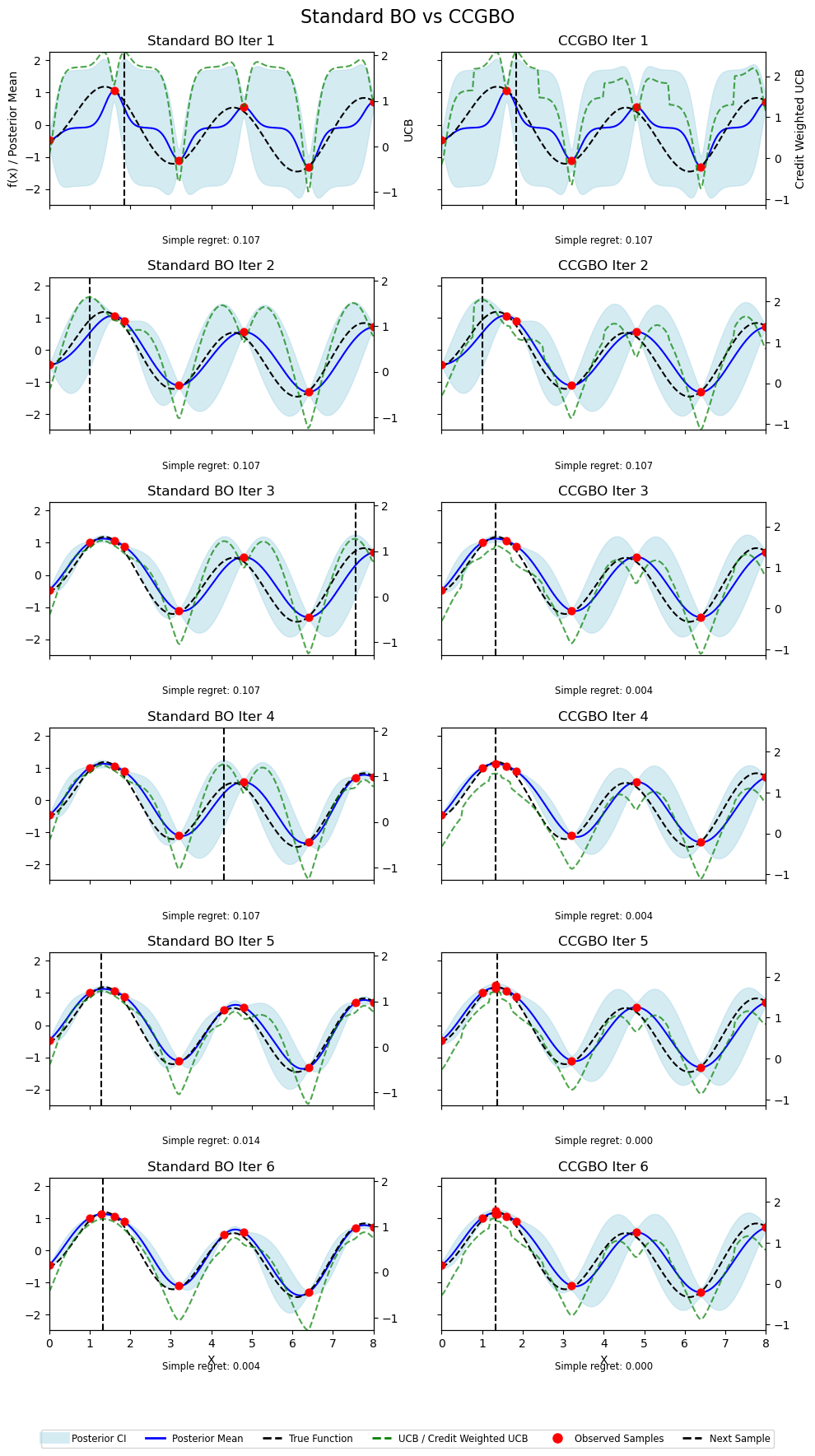}
    \end{minipage}
    \caption{%
    \textbf{Sequential optimization steps for Standard BO (left) vs.\ CCGBO (right).}  
    Each row shows iterations 1 through 6 on a one-dimensional toy function.  
    Blue curves and shaded regions: GP posterior mean and $95\%$ credible interval.  
    Black dashed curve: true objective.  
    Green dashed line: standard UCB (\emph{left}) or credit‐weighted UCB (\emph{right}).  
    Red dots: observed samples.  
    Vertical dashed line: next evaluation location.  
    Simple regret is reported below each subplot.  
    }
    \label{fig:iter_visualization}
\end{figure}

Figure~\ref{fig:iter_visualization} shows that CCGBO locates the global optimum far more quickly than standard GP‐UCB, reflected in a steeper decline of simple regret by iteration 4.  Although CCGBO's posterior fit in non‐critical regions can be less precise than that of standard UCB, our method deliberately allocates more evaluation budget to the high‐value region.  This focused sampling produces a more accurate local model around the true maximum, which in turn drives significantly better optimization performance.

\section{Proofs}
Here, we provide the complete proofs for the Theorem introduced in Section 5.

\subsection{Consistency of the MC proxy of the optimum}

\textbf{Assumptions 1}
There exists $\beta_t>0$ such that the GP-UCB confidence event
\(
\mathcal E_t:=\Big\{\,\lvert f(\mathbf{x})-\mu_t(\mathbf{x})\rvert \le \beta_t\,\sigma_t(\mathbf{x})\ \ \forall \mathbf{x}\in\mathcal X\,\Big\}
\)
holds with probability at least $1-\delta$~\citep{srinivas2009gaussian}. And there exist constants $c_0,c_1>0$ such that for the zero-mean posterior process 
$\varepsilon_t:=f_t-\mu_t$ we have
$\mathbb{E}\!\big[\sup_{\mathbf{x}\in\mathcal X}\varepsilon_t(\mathbf{x})\,\bigm|\,\mathcal D_t\big]\le c_0 S_t$~\citep{adler2007random},
and, conditionally on $\mathcal D_t$, the centered variables $Z_t^{(j)}-\mathbb{E}[Z_t^{(j)}\mid\mathcal D_t]$
are $(c_1 S_t)$-sub-Gaussian (equivalently, their $\psi_2$-norms are $\le c_1 S_t$)~\citep{ledoux2013probability}.

\begin{theorem}\label{prop:mc-consistency2}
Let $\mathcal X\subset\mathbb{R}^d$ be compact, and at iteration $t$ let the GP posterior
be $f_t\sim\mathcal{GP}(\mu_t,k_t)$ with $\sigma_t^2(\mathbf{x})=k_t(\mathbf{x},\mathbf{x})$ and
$S_t=\sup_{\mathbf{x}\in\mathcal X}\sigma_t(\mathbf{x})<\infty$.
Draw $K$ i.i.d.\ posterior sample paths $\{f_t^{(j)}\}_{j=1}^K$, and define
$Z_t^{(j)}=\max_{\mathbf{x}\in\mathcal X} f_t^{(j)}(\mathbf{x})$ and $Z_t=\tfrac1K\sum_{j=1}^K Z_t^{(j)}$.
Let $\mathbf{x}^\star\in\arg\max_{\mathbf{x}\in\mathcal X} f(\mathbf{x})$.
For any $\delta'\in(0,1)$, there exists a constant $C>0$ (depending only on $c_1$) such that
\begin{equation}\label{eq:mc-consistency-bound2}
\Pr\!  \left( 
  \bigl|Z_t - f(\mathbf{x}^\star)\bigr| 
  \le 
  \underbrace{\beta_t S_t}_{\text{model bias}}
  +
  \underbrace{C\,S_t\,\sqrt{\tfrac{\log(1/\delta')}{K}}}_{\text{MC error}}
\right)\;  \ge\; 1-\delta-\delta'.
\end{equation}
\end{theorem}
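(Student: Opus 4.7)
The plan is to decompose the error via a triangle inequality into a model bias term and a Monte Carlo error term, and bound each separately under a union of high-probability events. Define $m_t := \mathbb{E}[Z_t^{(1)}\mid\mathcal D_t]$; then
\begin{equation}
|Z_t - f(\mathbf{x}^\star)| \;\le\; |Z_t - m_t| \;+\; |m_t - f(\mathbf{x}^\star)|,
\end{equation}
so the bias term will be controlled via the GP confidence event and the MC term via sub-Gaussian concentration.

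For the bias, I would work on the GP-UCB confidence event $\mathcal E_t$, which holds with probability $\ge 1-\delta$. Two sandwich arguments suffice. First, since $Z_t^{(1)} = \max_{\mathbf{x}} f_t(\mathbf{x}) \ge f_t(\mathbf{x}^\star)$, taking conditional expectations gives $m_t \ge \mu_t(\mathbf{x}^\star) \ge f(\mathbf{x}^\star) - \beta_t\,\sigma_t(\mathbf{x}^\star) \ge f(\mathbf{x}^\star) - \beta_t S_t$. Second, writing $f_t=\mu_t+\varepsilon_t$, we have $Z_t^{(1)} \le \max_{\mathbf{x}} \mu_t(\mathbf{x}) + \sup_{\mathbf{x}} \varepsilon_t(\mathbf{x})$. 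On $\mathcal E_t$, $\max_{\mathbf{x}}\mu_t(\mathbf{x}) \le \max_{\mathbf{x}}(f(\mathbf{x})+\beta_t\sigma_t(\mathbf{x})) \le f(\mathbf{x}^\star) + \beta_t S_t$, while $\mathbb{E}[\sup_{\mathbf{x}}\varepsilon_t(\mathbf{x})\mid\mathcal D_t]\le c_0 S_t$ by Assumption~1, giving $m_t \le f(\mathbf{x}^\star) + (\beta_t+c_0)S_t$. Combining yields $|m_t-f(\mathbf{x}^\star)|\le(\beta_t+c_0)S_t$ on $\mathcal E_t$.

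For the MC error, conditional on $\mathcal D_t$ the centered variables $Z_t^{(j)} - m_t$ are i.i.d.\ $(c_1 S_t)$-sub-Gaussian, so their average concentrates by a standard Hoeffding-type bound:
\begin{equation}
\Pr\!\bigl(|Z_t - m_t|>\epsilon\,\bigm|\,\mathcal D_t\bigr) \;\le\; 2\exp\!\Bigl(-\tfrac{K\epsilon^2}{2c_1^2 S_t^2}\Bigr).
\end{equation}
Choosing $\epsilon = C\,S_t\sqrt{\log(1/\delta')/K}$ with $C$ depending only on $c_1$ makes the right-hand side at most $\delta'$; since this holds uniformly over $\mathcal D_t$, it also holds unconditionally. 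A final union bound over $\mathcal E_t$ and the MC event produces the total failure probability $\le \delta+\delta'$.

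The main obstacle will be bookkeeping around the additive constant $c_0$: the stated bound displays only $\beta_t S_t$ as the bias, whereas the proof naturally yields $(\beta_t+c_0)S_t$, so one must either absorb $c_0 S_t$ into an effective $\beta_t$ (as the accompanying remark makes explicit) or fold it into $C$ at the price of letting $C$ also depend on $c_0$. The remaining subtlety is verifying that conditional sub-Gaussianity of each $Z_t^{(j)}$ transfers cleanly to a concentration bound for the sample mean under conditioning; this follows from a routine Hoeffding argument but must be invoked carefully so that the high-probability statement survives marginalization over $\mathcal D_t$.
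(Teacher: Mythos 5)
Your proposal is correct and follows essentially the same route as the paper's proof: the identical decomposition into $|Z_t-m_t|+|m_t-f(\mathbf{x}^\star)|$, the confidence event plus the $c_0 S_t$ expected-supremum bound for the bias, sub-Gaussian (Hoeffding-type) concentration for the Monte Carlo average, and a final union bound. You also correctly flag the one real wrinkle — the bias naturally comes out as $(\beta_t+c_0)S_t$ rather than $\beta_t S_t$ — which the paper handles exactly as you suggest, by absorbing $c_0$ into $\beta_t$.
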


\begin{proof}
By Assumptions 1. Since $\mathcal X$ is compact, maxima exist. Decompose $\lvert Z_t-f(\mathbf{x}^\star)\rvert\le T_1+T_2$, where
$T_1=\lvert Z_t-\mathbb{E}[Z_t^{(j)}\mid\mathcal D_t]\rvert$ and
$T_2=\lvert \mathbb{E}[Z_t^{(j)}\mid\mathcal D_t]-f(\mathbf{x}^\star)\rvert$.
By standard sub-Gaussian concentration for sample means (conditionally on $\mathcal D_t$),
$T_1\le C S_t \sqrt{\log(1/\delta')/K}$ with conditional probability $\ge 1-\delta'$.
Next, write $f_t^{(j)}=\mu_t+\varepsilon_t^{(j)}$ with zero-mean Gaussian $\varepsilon_t^{(j)}$;
then
\begin{equation}
\mathbb{E}[Z_t^{(j)}\mid\mathcal D_t]
=\mathbb{E}\Big[\ \sup_{\mathbf{x}\in\mathcal X}\big(\mu_t(\mathbf{x})+\varepsilon_t^{(j)}(\mathbf{x})\big)\ \Bigm|\ \mathcal D_t\Big].
\end{equation}
Using $\sup(a+b)\le \sup a+\sup b$, symmetry of $\varepsilon_t^{(j)}$ (since it is zero-mean Gaussian), we get
$\big\lvert \mathbb{E}[Z_t^{(j)}\mid\mathcal D_t]-\sup_\mathbf{x} \mu_t(\mathbf{x})\big\rvert \le c_0 S_t$.
On $\mathcal E_t$, $\big\lvert \sup_\mathbf{x}\mu_t(\mathbf{x})-f(\mathbf{x}^\star)\big\rvert\le \beta_t S_t$, hence
$T_2\le (\beta_t+c_0)S_t$; absorbing $c_0$ into $\beta_t$ yields $T_2\le \beta_t S_t$.
Finally, a union bound gives total probability at least $1-\delta-\delta'$, which proves \eqref{eq:mc-consistency-bound}.
\end{proof}

\begin{remark}[RKHS Setting]
Theorem~\ref{prop:mc-consistency} remains valid under the RKHS setting of Assumption 2. The decomposition into \(T_1\) (Monte Carlo error) and \(T_2\) (model bias) still applies: \(T_1\) depends solely on the GP surrogate model, while the bound on \(T_2\) follows from the confidence event which holds under RKHS assumptions per Theorem~3 of~\citet{srinivas2009gaussian}.
\end{remark}

\subsection{Regret Bound}

% \subsubsection{Setup and Assumptions}

We denote the instantaneous regret at step \(t\) by
\(
r_t = f(\mathbf{x}^*) - f(\mathbf{x}_t),
\)
where \(\mathbf{x}^*=\arg\max_{\mathbf{x}}f(\mathbf{x})\), and the cumulative regret by
\(
R_N = \sum_{t=1}^N r_t.
\)
We will analyze the Credit-Weighted UCB (CCGBO) acquisition
\(
\alpha_t(\mathbf{x}) = \bigl(1-\lambda + \lambda\,w_t(\mathbf{x})\bigr)\,\bigl[\mu_{t-1}(\mathbf{x}) + \beta_t\,\sigma_{t-1}(\mathbf{x})\bigr],
\)
where
\(
w_t(\mathbf{x}) = \bigl[c(\mathbf{x})\bigr]^{\tfrac{\tau}{1 + t/M}},
\)
and we further assume
\(
0 < c_{\min} \le c(\mathbf{x}) \le c_{\max} < \infty
~\forall \mathbf{x}\in\mathcal X,
\)
\(\lambda\in[0,1]\) is the credit‐influence parameter, and \(\{\beta_t\}\) is chosen as in~\citet{srinivas2009gaussian}. Under Assumption 2, the confidence bound \(|f(\mathbf{x})-\mu_{t-1}(\mathbf{x})| \le \beta_t\,\sigma_{t-1}(\mathbf{x})\) holds with probability at least \(1-\delta\).

\begin{theorem}\label{thm:ccgbo2}
Define for each \(t\),
\(
A_t = 1-\lambda + \lambda\,c_{\min}^{\,\tau/(1 + t/M)},
~
B_t = 1-\lambda + \lambda\,c_{\max}^{\,\tau/(1 + t/M)}.
\)
Observe that \(0<A_t\le B_t<\infty\) and \(B_t/A_t\to1\) as \(t\to\infty\).
Under Assumption 2 and the above high‐probability event, the cumulative regret of CCGBO satisfies
\begin{equation}
R_N^{\rm ccg}
\;\le\;
\max_{1\le t\le N}\!\Bigl(\tfrac{B_t}{A_t}\Bigr)\;R_N,
\end{equation}
where \(R_N=O\bigl(\sqrt{N\,\gamma_N\,\beta_N}\bigr)\) is the standard GP-UCB bound from~\citep{srinivas2009gaussian} and \(\gamma_N\) is the maximum information gain.  In particular, since \(\max_{t}B_t/A_t=B_1/A_1<\infty\), CCGBO matches the vanilla GP-UCB rate up to a constant factor that tends to 1 when \(\lambda\to0\) (so \(A_t=B_t=1\)) or \(c_{\max}/c_{\min}\to1\).
\end{theorem}

\begin{proof}
Condition on the event \(\mathcal E\) that for all \(t,\mathbf{x}\),
\(
f(\mathbf{x})\le\mu_{t-1}(\mathbf{x})+\beta_t\,\sigma_{t-1}(\mathbf{x}),
~
f(\mathbf{x})\ge\mu_{t-1}(\mathbf{x})-\beta_t\,\sigma_{t-1}(\mathbf{x}).
\)
Define the shifted acquisition function. Let
\(
B_{\min,t} := \min_{\mathbf{x} \in \mathcal{X}} \bigl[\mu_{t-1}(\mathbf{x}) + \beta_t\,\sigma_{t-1}(\mathbf{x})\bigr],
\)
and set \(\tilde{\alpha}_t(\mathbf{x}) = \mu_{t-1}(\mathbf{x}) + \beta_t\,\sigma_{t-1}(\mathbf{x}) - B_{\min,t} \ge 0\). The credit-weighted acquisition becomes
\(
a_t(\mathbf{x}) = \bigl(1-\lambda + \lambda\,w_t(\mathbf{x})\bigr)\,\tilde{\alpha}_t(\mathbf{x}).
\)
By definition, \(\mathbf{x}_t=\arg\max_{\mathbf{x}} a_t(\mathbf{x})\), so for the true maximizer \(\mathbf{x}^*\),
\(
a_t(\mathbf{x}^*) \le a_t(\mathbf{x}_t).
\)
Thus
\begin{equation}
\begin{split}
&A_t\bigl[\mu_{t-1}(\mathbf{x}^*)+\beta_t\,\sigma_{t-1}(\mathbf{x}^*) - B_{\min,t}\bigr]\\
&\le
\bigl(1-\lambda + \lambda\,w_t(\mathbf{x}^*)\bigr)\bigl[\mu_{t-1}(\mathbf{x}^*)+\beta_t\,\sigma_{t-1}(\mathbf{x}^*) - B_{\min,t}\bigr]\\
&= a_t(\mathbf{x}^*)
\le
a_t(\mathbf{x}_t)
\le
B_t\bigl[\mu_{t-1}(\mathbf{x}_t)+\beta_t\,\sigma_{t-1}(\mathbf{x}_t) - B_{\min,t}\bigr].
\end{split}
\end{equation}
Rearranging gives
\(
\mu_{t-1}(\mathbf{x}^*)+\beta_t\,\sigma_{t-1}(\mathbf{x}^*) - B_{\min,t}
\le
\Bigl(\tfrac{B_t}{A_t}\Bigr)\bigl[\mu_{t-1}(\mathbf{x}_t)+\beta_t\,\sigma_{t-1}(\mathbf{x}_t) - B_{\min,t}\bigr].
\)
Since \(f(\mathbf{x}^*) \le \mu_{t-1}(\mathbf{x}^*)+\beta_t\,\sigma_{t-1}(\mathbf{x}^*)\) and \(f(\mathbf{x}_t)\ge\mu_{t-1}(\mathbf{x}_t)-\beta_t\,\sigma_{t-1}(\mathbf{x}_t)\), we obtain
\begin{equation}
\begin{aligned}
r_t^{\mathrm{ccg}}
&= f(\mathbf{x}^*) - f(\mathbf{x}_t)\\
&= \bigl(f(\mathbf{x}^*) - B_{\min,t}\bigr) - \bigl(f(\mathbf{x}_t) - B_{\min,t}\bigr)\\
&\le \bigl[\mu_{t-1}(\mathbf{x}^*)+\beta_t\,\sigma_{t-1}(\mathbf{x}^*) - B_{\min,t}\bigr] - \bigl[f(\mathbf{x}_t) - B_{\min,t}\bigr]\\
&\le \Bigl(\tfrac{B_t}{A_t}\Bigr)\bigl[\mu_{t-1}(\mathbf{x}_t)+\beta_t\,\sigma_{t-1}(\mathbf{x}_t) - B_{\min,t}\bigr] - \bigl[\mu_{t-1}(\mathbf{x}_t)-\beta_t\,\sigma_{t-1}(\mathbf{x}_t) - B_{\min,t}\bigr]\\
&= \underbrace{\Bigl(\tfrac{B_t}{A_t}\Bigr)\,2\beta_t\,\sigma_{t-1}(\mathbf{x}_t)}_{\text{standard term}}
   + \underbrace{\Bigl(\tfrac{B_t}{A_t}-1\Bigr)\bigl[\mu_{t-1}(\mathbf{x}_t)-\beta_t\,\sigma_{t-1}(\mathbf{x}_t)\bigr]}_{\text{residual term}}
   - \underbrace{\Bigl(\tfrac{B_t}{A_t}-1\Bigr) B_{\min,t}}_{\text{shift term}}.
\end{aligned}
\end{equation}

For the standard term, note that $c_{\min}>0$ and $c_{\max}\le 1$ imply that $A_t$ and $B_t$
are uniformly bounded away from $0$ and $\infty$, so there exists a constant $C_1>0$ such that
$\frac{B_t}{A_t}\le C_1$ for all $t$. Hence
\[
\frac{B_t}{A_t}\,2\beta_t\sigma_{t-1}(\mathbf{x}_t)
\;\le\; 2C_1\,\beta_t\sigma_{t-1}(\mathbf{x}_t).
\]
Summing over $t$ and using the standard GP-UCB argument of \citet{srinivas2009gaussian}, we obtain
\[
\sum_{t=1}^N \frac{B_t}{A_t}\,2\beta_t\sigma_{t-1}(\mathbf{x}_t)
= O\bigl(\sqrt{N\,\gamma_N\,\beta_N}\bigr).
\]

For the residual and shift terms, observe that $\mu_{t-1}(\mathbf{x}_t)-\beta_t\sigma_{t-1}(\mathbf{x}_t)$ is the lower confidence bound. Under Assumption 2, we have $|f(\mathbf{x})|\le B$ for all $\mathbf{x}$, and on the confidence event
\[
\mu_{t-1}(\mathbf{x}_t)-\beta_t\sigma_{t-1}(\mathbf{x}_t)
\;\le\; f(\mathbf{x}_t) \;\le\; B.
\]
For the shift term, note that on the confidence event $\mu_{t-1}(\mathbf{x})+\beta_t\sigma_{t-1}(\mathbf{x}) \ge f(\mathbf{x}) \ge -B$ for all $\mathbf{x}$. Hence $B_{\min,t} \ge -B$, which implies $-B_{\min,t} \le B$.

Moreover, by construction $B_t\ge A_t$, so $\frac{B_t}{A_t}-1\ge 0$.
From the explicit expressions
\[
A_t = 1-\lambda + \lambda c_{\min}^{\alpha_t},\qquad
B_t = 1-\lambda + \lambda c_{\max}^{\alpha_t},\qquad
\alpha_t = \frac{\tau}{1+t/M},
\]
and the fact that $c_{\min},c_{\max}\in(0,1]$, a simple application of the mean value theorem
shows that there exists a constant $C_2>0$ such that
\[
\frac{B_t}{A_t}-1 \;\le\; \frac{C_2}{1+t/M}
\quad\text{for all }t.
\]
Therefore, combining the residual and shift terms and using the bounds above:
\[
\sum_{t=1}^N\Bigl(\frac{B_t}{A_t}-1\Bigr)\bigl[\mu_{t-1}(\mathbf{x}_t)-\beta_t\sigma_{t-1}(\mathbf{x}_t)\bigr]
- \sum_{t=1}^N\Bigl(\frac{B_t}{A_t}-1\Bigr) B_{\min,t}
\;\le\; \sum_{t=1}^N 2\Bigl(\frac{B_t}{A_t}-1\Bigr) B
\;\le\; 2 C_2 B \sum_{t=1}^N \frac{1}{1+t/M}
= O(\log N).
\]

Combining the bounds for the standard, residual, and shift terms, we get
\[
R_N^{\mathrm{ccg}}
=\sum_{t=1}^N r_t^{\mathrm{ccg}}
= O\bigl(\sqrt{N\,\gamma_N\,\beta_N}\bigr) + O(\log N)
= O\bigl(\sqrt{N\,\gamma_N\,\beta_N}\bigr),
\]
so CCGBO retains the same sublinear regret rate as GP-UCB.

\end{proof}

\section{CCGBO with other acquisition functions}

\paragraph{Relation to Information-based Acquisition Functions.}
Information-based acquisition functions, such as Entropy Search (ES)~\citep{hennig2012entropy}, Predictive Entropy Search (PES)~\citep{hernandez2014predictive}, and Max-value Entropy Search (MES)~\citep{wang2017max}, focus on selecting points that maximize the information gain about the location of the optimum $\mathbf{x}^*$. While conceptually related to our work in seeking to identify promising regions, CCGBO differs in two key aspects: (1)~Our core principle is to use the function value proxy $Z_t$ to compute counterfactual credit for reweighting the acquisition function, rather than directly modeling the distribution of $\mathbf{x}^*$; information-based methods still treat all candidate points equally when computing information gain. (2)~CCGBO is a generic upper-layer module that can be combined with any base acquisition function, including ES-family methods, whereas ES/PES/MES are standalone acquisition functions. We demonstrate this compatibility by combining CCGBO with Joint Entropy Search (JES)~\citep{hvarfner2022joint}, an advanced member of the ES family, and observe consistent improvements (see Figure~\ref{fig:Result_JES}).

\paragraph{Compatibility with Different Acquisition Functions.}
CCGBO is a plug-and-play module and can be combined with acquisition functions beyond UCB. The counterfactual credit-based weight can be simply multiplied by other acquisition functions as a coefficient. As suggested, we have evaluated it with several others, including TS in Figure~\ref{fig:Result_TS}, logEI~\citep{ament2023unexpected} in Figure~\ref{fig:Result_EI}, and JES~\citep{hvarfner2022joint} in Figure~\ref{fig:Result_JES}. We observe that CCGBO yields similar performance gains when paired with JES and TS, just as with UCB, whereas with logEI (an EI-family variant) the improvement is negligible. We believe this difference stems from the inherently greedy nature of EI-style methods: since they already concentrate heavily on nearby “promising” regions, multiplying by credit weights does not materially change their ranking or introduce substantial additional guidance. In contrast, acquisitions with explicit or implicit exploration (e.g., UCB, TS, JES) benefit more from the extra bias toward high-contribution regions while retaining their ability to explore.

These findings further confirm the generality of our credit-weighting approach across a variety of acquisition strategies, with performance improving or at least not degrading in most cases.

\begin{figure}[!t]
    \centering
    \begin{minipage}[b]{\textwidth}
        \centering
        \includegraphics[width=1\textwidth]{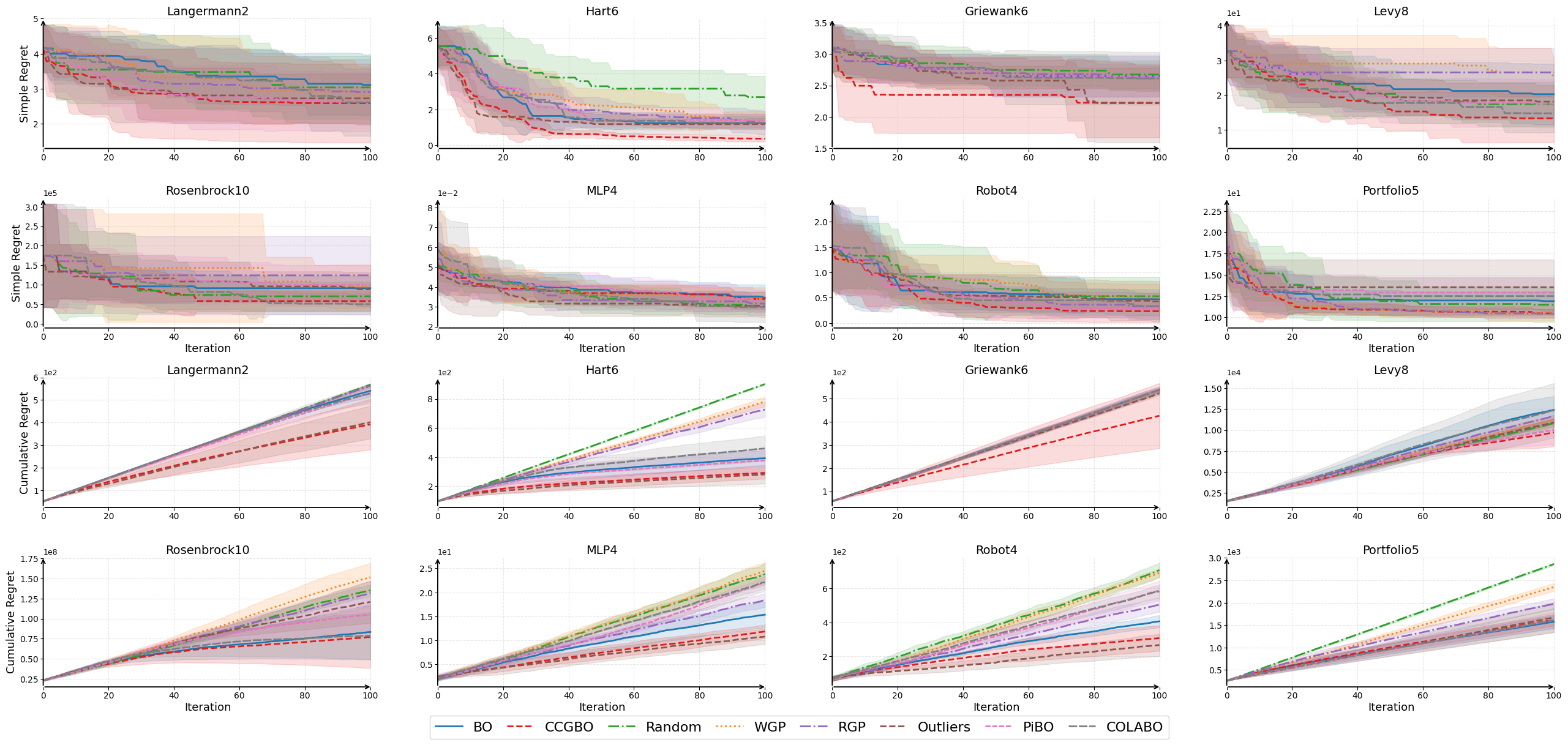}
    \end{minipage}
    \caption{Cumulative regret and Simple regret versus iteration for 8 benchmark functions on TS. Each subplot plots cumulative regret over iterations $t$, comparing the eight baselines. }
    \label{fig:Result_TS}
\end{figure}

\begin{figure}[!t]
    \centering
    \begin{minipage}[b]{\textwidth}
        \centering
        \includegraphics[width=1\textwidth]{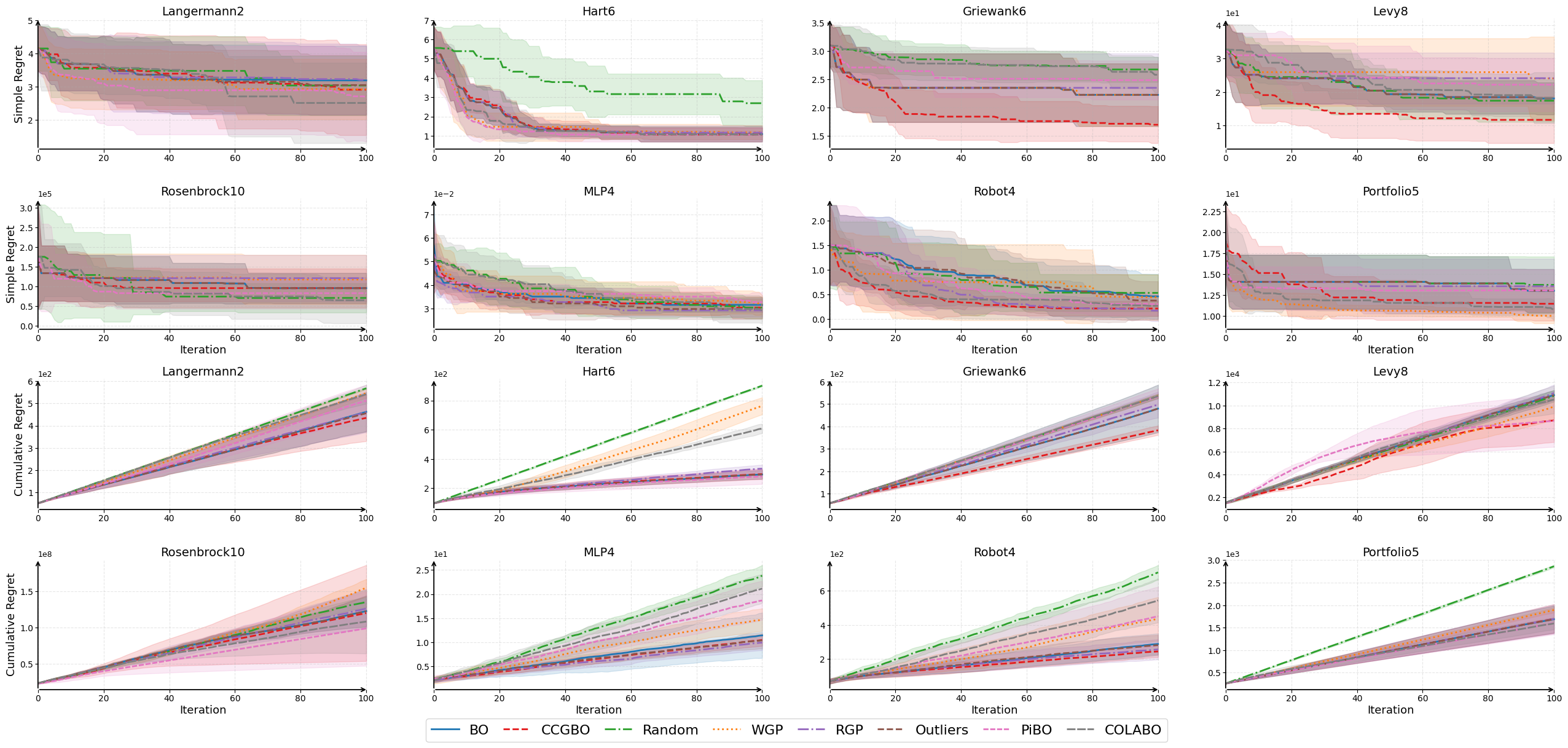}
    \end{minipage}
    \caption{Cumulative regret and Simple regret versus iteration for 8 benchmark functions on LogEI. Each subplot plots cumulative regret over iterations $t$, comparing the eight baselines. }
    \label{fig:Result_EI}
\end{figure}

\begin{figure}[!t]
    \centering
    \begin{minipage}[b]{\textwidth}
        \centering
        \includegraphics[width=1\textwidth]{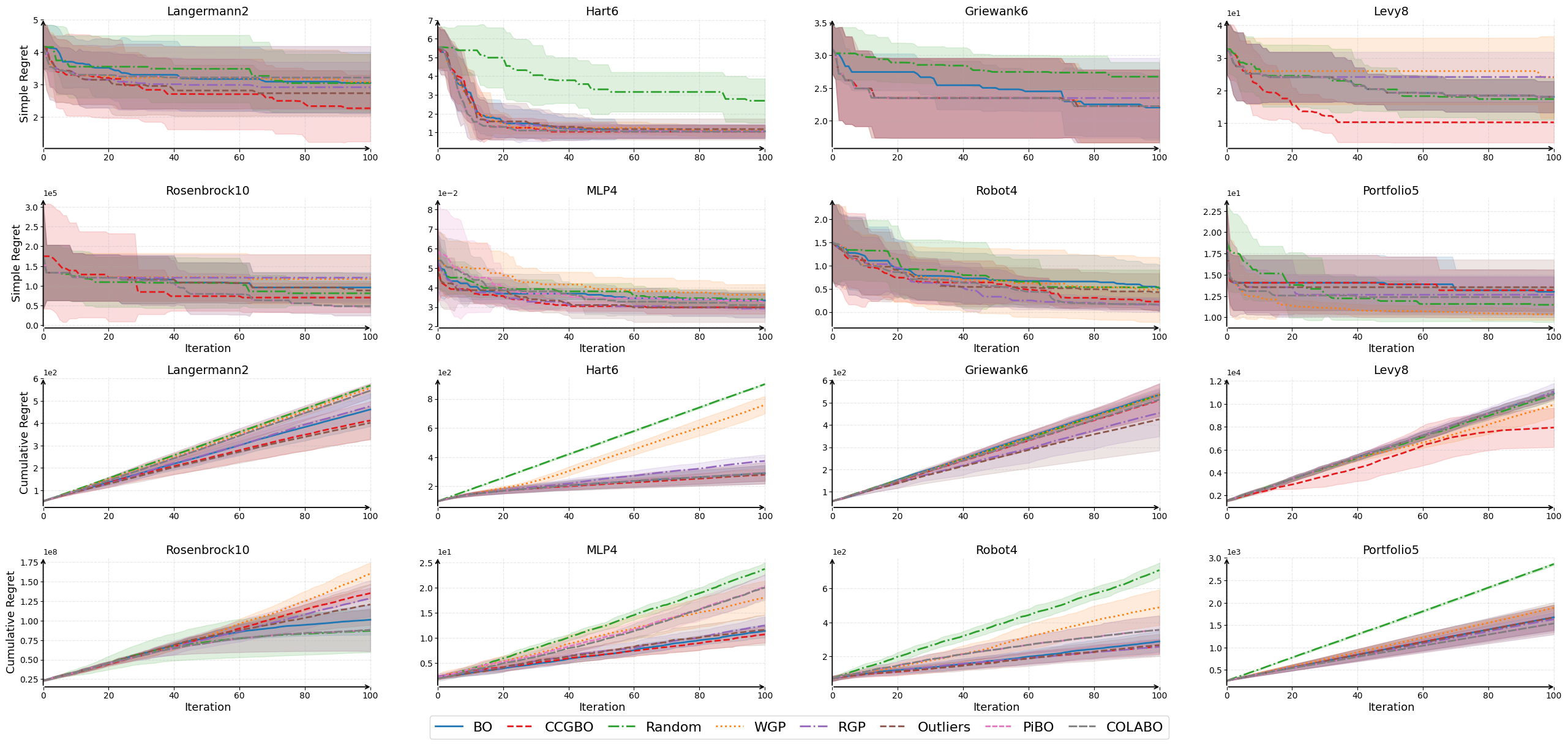}
    \end{minipage}
    \caption{Cumulative regret and Simple regret versus iteration for 8 benchmark functions on JES. Each subplot plots cumulative regret over iterations $t$, comparing the eight baselines. }
    \label{fig:Result_JES}
\end{figure}

\section{High-dimensional Tasks}

Tackling the intrinsic challenges of very high‑dimensional BO is not the primary focus or contribution of our paper, but rather a distinct research topic in its own right. To nonetheless demonstrate CCGBO’s behavior, we tested all methods on high‑dimensional benchmarks. 

However, directly comparing against algorithms specifically designed for high‑D BO would be unfair, since those methods adapt their kernels or acquisition functions to that setting. Instead, we adopted the straightforward “Vanilla Bayesian Optimization Performs Great in High Dimensions” \citep{hvarfner2024vanilla} strategy: in a standard GP + acquisition framework, one simply shifts the log‐lengthscale prior upward by $\tfrac12\log D$ (i.e.\ scales the lengthscale by $\sqrt{D}$) and fixes the signal variance to 1. We applied this modified GP backbone across all CCGBO and baseline algorithms to ensure a fair comparison. 

In the experiments, we follow \citep{hvarfner2024vanilla} to employ Embedded Test Functions to evaluate the performance of Bayesian optimization algorithms in high-dimensional spaces. These functions are constructed by embedding classical low-dimensional benchmark functions into higher-dimensional spaces.
Given a $d_{eff}$-dimensional benchmark function $f: \mathbb{R}^{d_{eff}} \rightarrow \mathbb{R}$, we embed it into a $D$-dimensional space ($D \gg d_{eff}$) to construct a new objective function $f_{embed}: \mathbb{R}^{D} \rightarrow \mathbb{R}$:
\(
f_{embed}(\mathbf{x}) = f(\mathbf{x}_{1:d_{eff}})
\)
where $\mathbf{x}_{1:d_{eff}}$ denotes the first $d_{eff}$ components of the input vector $\mathbf{x} \in \mathbb{R}^{D}$. This means that the objective function value depends only on the first $d_{eff}$ dimensions (effective dimensions), while the remaining $D - d_{eff}$ dimensions have no effect on the function value.
In this work, two classical benchmark functions are adopted as bases:
(1) Levy function: $4$ effective dimensions, characterized by multiple local optima, used to evaluate global search capability.
(2) Hartmann function: $6$ effective dimensions, with a complex non-convex landscape, widely used in optimization benchmarking.
These functions are embedded into $25$, $100$, $300$, and $1000$-dimensional spaces for experiments. For example, \texttt{Levy4\_1000} denotes embedding the $4$-dimensional Levy function into a $1000$-dimensional space.

The resulting performance on high‑dimensional tasks is reported in Figure~\ref{fig:High}.
When these algorithms all get some performance improvements with the help of the algorithm proposed in "Vanilla Bayesian Optimization Performs Great in High Dimensions", they no longer struggle in high-dimensional tasks like standard GP. In such a comparison, our CCGBO still achieves the good results.

\begin{figure}[!t]
    \centering
    \begin{minipage}[b]{\textwidth}
        \centering
        \includegraphics[width=1\textwidth]{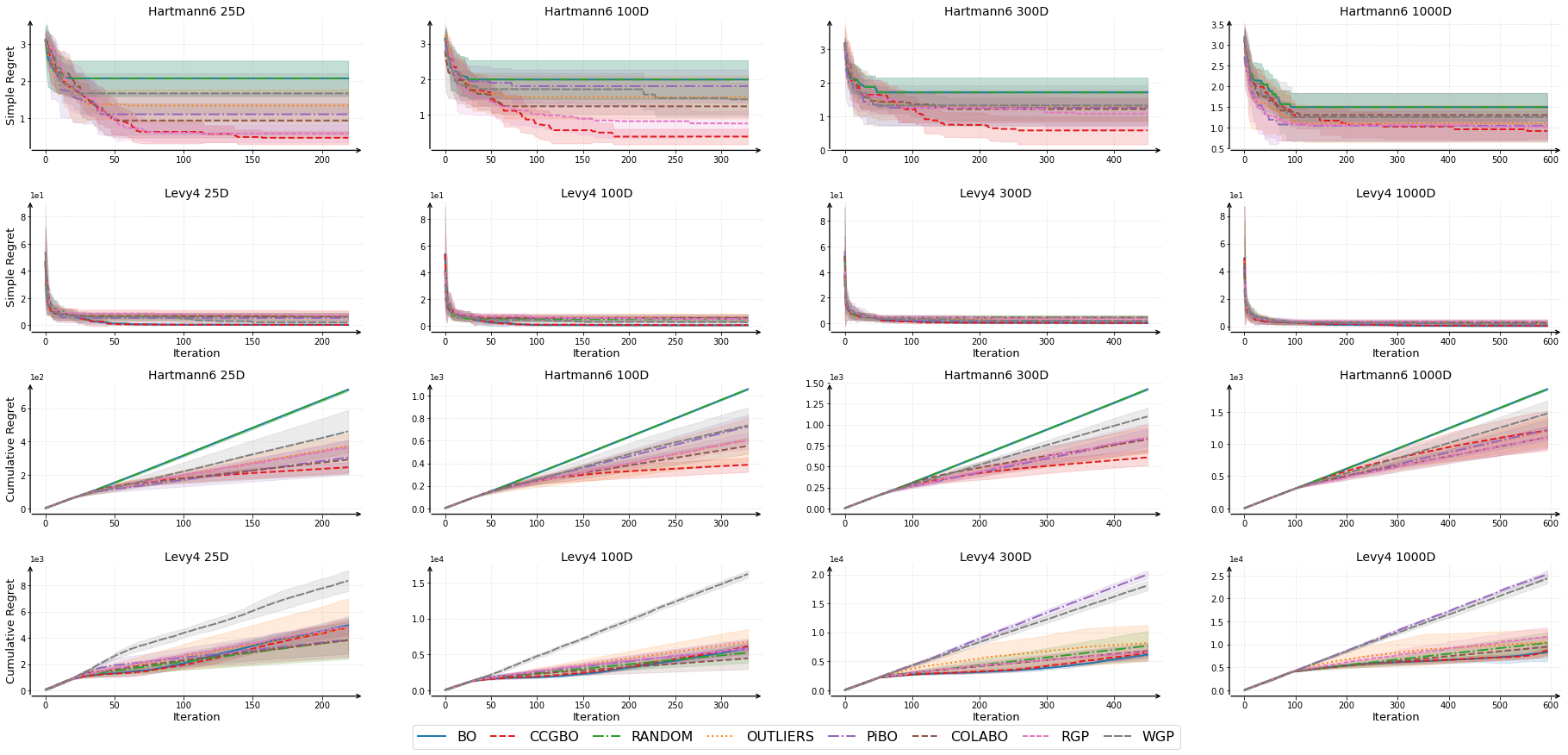}
    \end{minipage}
    \caption{Simple regret versus iteration for eight high-dimensional test problems. Each subplot shows the mean simple regret over iterations $t$ for CCGBO and baselines. }
    \label{fig:High}
\end{figure}

\section{Ablation Study on ``half‐life'' parameter M}

\begin{figure}[!t]
    \centering
    \begin{minipage}[b]{\textwidth}
        \centering
        \includegraphics[width=0.6\textwidth]{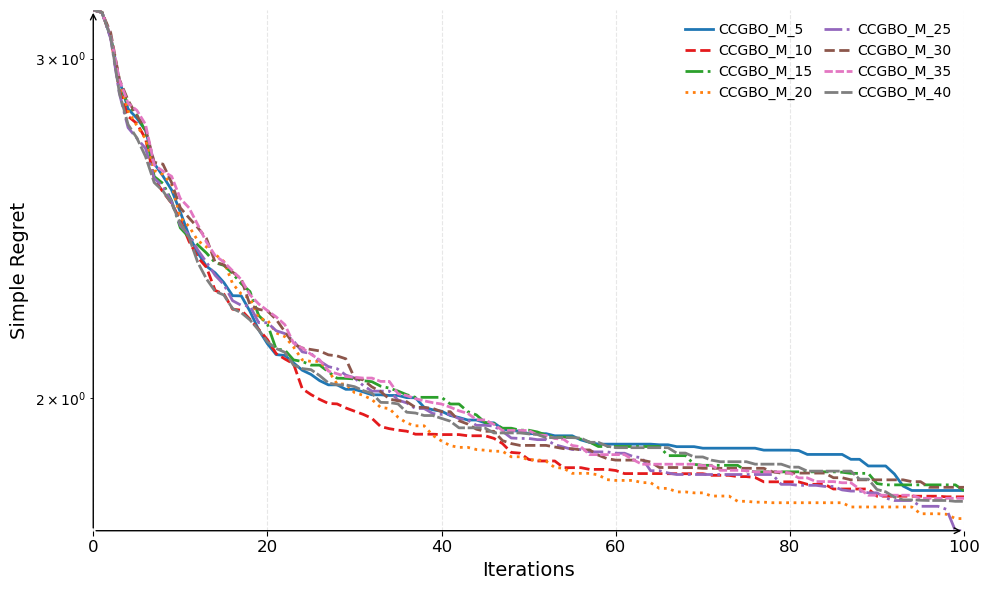}
    \end{minipage}
    \caption{%
    {\bf Ablation of ``half‐life'' parameter \(M\) on simple regret.}  
    Curves show simple regret over iterations for CCGBO with \(M\in\{5,10,15,20,25,30,35,40\}\).  
    }
    \label{fig:ablation_M}
\end{figure}

Figure~\ref{fig:ablation_M} shows how varying the credit decay constant \(M\) affects simple regret.  Recall that in our credit‐weighted UCB, the per‐iteration exponent on the normalized credit \(\pi(\mathbf{x})\) is \(\tau / (1 + n/M)\), so \(M\) controls how quickly the extra credit weighting fades:
(1)Small \(M\) (e.g.\ \(M=10\)).  The exponent \(\tau/(1 + n/M)\) drops rapidly as \(n\) increases, causing \(w_n(\mathbf{x})\to1\) after only a few iterations.  Consequently, the acquisition function reverts almost immediately to vanilla UCB, and the benefit of credit guidance is lost, slowing late‐stage convergence.
(2))Large \(M\) (e.g.\ \(M\ge30\)).  The credit exponent decays very slowly, so credit influence remains strong throughout most of the run.  This over‐emphasis on early high‐credit regions leads to excessive exploitation and a premature plateau in the simple regret curve.
(3))Moderate \(M\) (e.g.\ \(15 \le M \le 25\)).  The exponent decays at a controlled rate: early iterations are properly biased by reliable credit signals, while later iterations smoothly return to a balanced exploration–exploitation regime.  In our experiments, \(M=20\) yielded the lowest final simple regret and the most consistent downward trend.

\section{Ablation Study on MC surrogate}

\begin{figure}[!t]
    \centering
    \begin{minipage}[b]{\textwidth}
        \centering
        \includegraphics[width=0.6\textwidth]{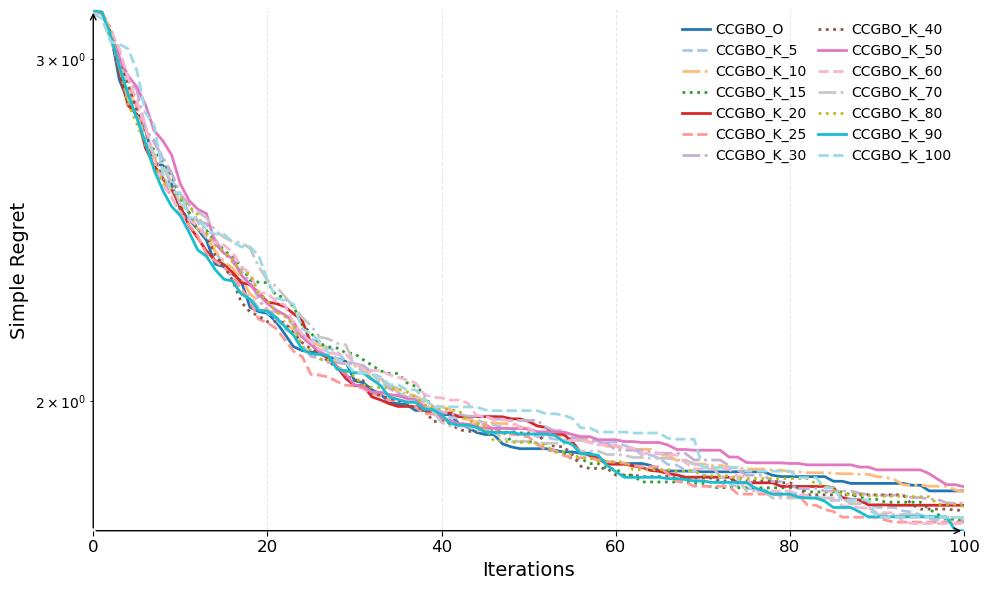}
    \end{minipage}
    \caption{%
    {\bf Ablation of MC surrogate size \(K\) on simple regret.}  
    Each curve plots the mean simple regret over iterations for CCGBO using different values of the credit decay parameter \(M\in\{5,10,15,20,25,30,35,40\}\).
    }
    \label{fig:ablation_K}
\end{figure}

Figure~\ref{fig:ablation_K} illustrates how the number of GP posterior samples \(K\) used in the Monte Carlo surrogate affects optimization performance:
(1))Small \(K\) (e.g.\ \(K\le10\)). The proxy \(Z_t\) is estimated from very few samples, making the resulting credit scores noisy. Early iterations therefore receive unstable guidance, and the algorithm behaves similarly to standard UCB with erratic credit weighting, yielding modest gains over the baseline.
(2)Moderate \(K\) (e.g.\ \(15 \le K \le 30\)). The proxy becomes sufficiently accurate to produce reliable credit signals, while still incurring moderate computation. In this regime, CCGBO attains the fastest and most consistent reduction in simple regret, with \(K\approx20 - 30\) achieving the lowest final regret.
(3))Large \(K\) (e.g.\ \(K\ge60\)). Further increasing \(K\) yields diminishing returns: the proxy is more precise, but the extra computational cost grows. These results demonstrate that a moderate MC surrogate size (around \(K=20 - 30\)) strikes the best balance between proxy fidelity and computational efficiency, enabling CCGBO to leverage accurate counterfactual credits for rapid convergence without over‐exploitation or excessive cost.

\section{Ablation Study on Nearest Neighbors H}

\begin{figure}[!t]
    \centering
    \begin{minipage}[b]{\textwidth}
        \centering
        \includegraphics[width=0.6\textwidth]{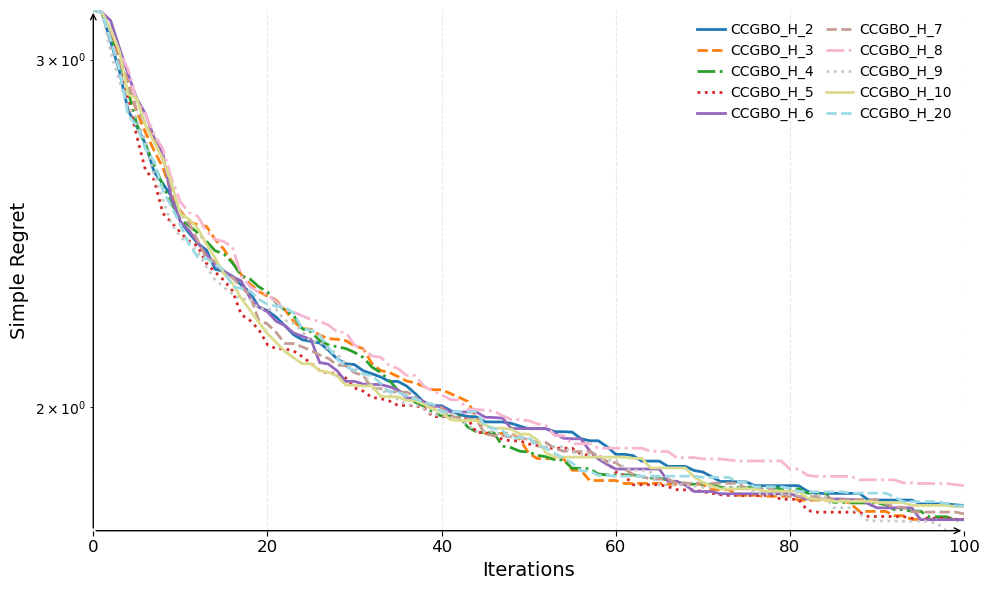}
    \end{minipage}
    \caption{
    \textbf{Ablation of KNN neighbor parameter \(H\) on simple regret.}  
    Each curve shows the mean simple regret over 50 runs for CCGBO using different numbers of nearest neighbors \(H\in\{2,3,\dots,10,20\}\) to propagate discrete credits to continuous candidate points.%
    }
    \label{fig:ablation_H}
\end{figure}

Figure~\ref{fig:ablation_H} evaluates how the number of neighbors \(H\) used in the KNN‐based credit propagation affects optimization performance:
(1)Small \(H\) (e.g.\ \(H=2-3\)). Credit propagation is highly localized: each candidate inherits credit from only a few nearest samples, resulting in a very rugged credit field. Early iterations show erratic behavior and slower convergence due to noisy weight estimates.
(2)Moderate \(H\) (e.g.\ \(H=4-7\))). The credit field balances locality and smoothness. Candidates in high‐credit regions receive coherent weighting, while edge points still retain exploratory potential. This setting yields the fastest decline in simple regret and the lowest final regret.
(3)Large \(H\) (e.g.\ \(H\ge10\)). Credit smoothing becomes excessive: distant, low‐credit samples unduly influence each candidate. The credit contrast diminishes, causing the acquisition to revert toward uniform UCB behavior and slowing late‐stage improvement. These results indicate that a moderate neighbor count (\(H\approx5-7\)) provides the best trade‐off between a stable yet discriminative credit field, enabling CCGBO to effectively leverage historical contributions without over‐smoothing.

\section{Ablation Study on Additional Hyperparameters}

In this section, we provide ablation studies on the remaining hyperparameters: the credit influence parameter \(\lambda\), the UCB exploration factor \(\beta_t\), the numerical stability constant \(\varepsilon_c\), the minimum credit bound \(r_{\min}\), and the credit decay exponent \(\tau\).

\paragraph{Credit Influence Parameter \(\lambda\).}
The parameter \(\lambda \in [0,1]\) controls the strength of credit influence in the acquisition function. When \(\lambda = 0\), CCGBO reduces to standard UCB; when \(\lambda = 1\), the credit weighting is fully applied. Table~\ref{tab:ablation_lambda} reports the final simple regret on the Hartmann6 benchmark for different values of \(\lambda\).

\begin{table}[!ht]
\centering
\caption{Ablation of credit influence parameter \(\lambda\) on simple regret (Hartmann6, mean \(\pm\) std over 50 runs).}
\label{tab:ablation_lambda}
\small
\begin{tabular}{lccccccccc}
\toprule
\(\lambda\) & 0.1 & 0.2 & 0.3 & 0.4 & 0.5 & 0.6 & 0.7 & 0.8 & 0.9 \\
\midrule
SimRegret & 1.17{\tiny$\pm$0.55} & 1.17{\tiny$\pm$0.44} & 1.15{\tiny$\pm$0.42} & 1.13{\tiny$\pm$0.41} & 1.03{\tiny$\pm$0.33} & 1.03{\tiny$\pm$0.35} & 1.06{\tiny$\pm$0.36} & 1.06{\tiny$\pm$0.36} & 1.06{\tiny$\pm$0.36} \\
\bottomrule
\end{tabular}
\end{table}

We observe that performance improves as \(\lambda\) increases from small to moderate values and maintains similar performance in the range \(\lambda \ge 0.5\). This indicates that CCGBO is robust to the choice of \(\lambda\) over a wide range, rather than requiring careful tuning.

\paragraph{Comparison with GP-UCB under Varying \(\beta_t\).}
A natural question is whether simply reducing the exploration parameter \(\beta_t\) in standard GP-UCB could achieve comparable performance to CCGBO by inducing more exploitative behavior. To investigate this hypothesis, we conducted a systematic comparison between GP-UCB with varying \(\beta_t\) values and CCGBO with a fixed \(\beta_t = 2.576\). Table~\ref{tab:beta_comparison} reports the final simple regret on three representative benchmark problems.
 
\begin{table}[!ht]
\centering
\caption{Comparison of GP-UCB with varying \(\beta_t\) values against CCGBO on simple regret (mean \(\pm\) std over 50 runs). Lower is better.}
\label{tab:beta_comparison}
\small
\setlength{\tabcolsep}{4pt}
\begin{tabular}{lccccccc}
\toprule
Problem & \(\beta=0.5\) & \(\beta=1.0\) & \(\beta=2.0\) & \(\beta=2.576\) & \(\beta=4.6\) & CCGBO \\
\midrule
Rosenbrock2 & 45.1{\scriptsize$\pm$32.5} & 60.4{\scriptsize$\pm$52.2} & 58.3{\scriptsize$\pm$65.7} & 55.5{\scriptsize$\pm$68.9} & 58.9{\scriptsize$\pm$64.6} & \textbf{42.2}{\scriptsize$\pm$12.5} \\
Hartmann6 & 4.86{\scriptsize$\pm$1.41} & 3.88{\scriptsize$\pm$1.52} & 1.85{\scriptsize$\pm$1.25} & 1.17{\scriptsize$\pm$0.53} & 1.24{\scriptsize$\pm$0.34} & \textbf{1.03}{\scriptsize$\pm$0.33} \\
Levy8 & 22.9{\scriptsize$\pm$5.89} & 21.3{\scriptsize$\pm$5.13} & 19.1{\scriptsize$\pm$4.80} & 17.7{\scriptsize$\pm$4.52} & 20.5{\scriptsize$\pm$4.56} & \textbf{11.0}{\scriptsize$\pm$6.38} \\
\bottomrule
\end{tabular}
\end{table}
 
The results reveal that reducing \(\beta_t\) alone cannot replicate the performance of CCGBO. On the unimodal Rosenbrock2 function, smaller \(\beta_t\) values yield marginal improvements but still underperform CCGBO, which achieves both lower mean regret and substantially reduced variance. On multimodal landscapes (Hartmann6 and Levy8), decreasing \(\beta_t\) suppresses exploration globally, causing GP-UCB to converge prematurely to local optima. In contrast, CCGBO selectively amplifies exploitation only in regions with high estimated contribution while preserving standard exploration--exploitation balance elsewhere. Furthermore, the time-decaying weighting mechanism ensures that CCGBO gradually reverts to standard UCB behavior in later iterations, thereby avoiding entrapment in local optima.

Overall, these ablation studies demonstrate that CCGBO maintains robust performance across a wide range of hyperparameter settings, indicating that the reported improvements are not artifacts of careful tuning but reflect the inherent benefits of the counterfactual credit mechanism.

\clearpage

\end{document}